\documentclass{ecai}  

\usepackage{graphicx}
\usepackage{latexsym}
\usepackage{commands}
\usepackage{amsmath}
\usepackage{todonotes}
\usepackage{amsthm}
\usepackage{float}
\usepackage{amsfonts}
\usepackage{algorithm}
\usepackage{algpseudocode}
\usepackage{xcolor}
\usepackage{listings}

\definecolor{codegreen}{rgb}{0,0.6,0}
\definecolor{codegray}{rgb}{0.5,0.5,0.5}
\definecolor{codepurple}{rgb}{0.58,0,0.82}
\definecolor{backcolour}{rgb}{0.95,0.95,0.92}

\lstdefinestyle{mystyle}{
    backgroundcolor=\color{backcolour},   
    commentstyle=\color{codegreen},
    keywordstyle=\color{magenta},
    numberstyle=\tiny\color{codegray},
    stringstyle=\color{codepurple},
    basicstyle=\ttfamily\footnotesize,
    breakatwhitespace=false,         
    breaklines=true,                 
    captionpos=b,                    
    keepspaces=true,                 
    numbers=left,                    
    numbersep=5pt,                  
    showspaces=false,                
    showstringspaces=false,
    showtabs=false,                  
    tabsize=2
}

\lstset{style=mystyle}


\renewcommand{\phi}{\varphi}

\newtheorem{example}{Example}
\newtheorem{proposition}{Proposition}
\newtheorem{definition}{Definition}

\newtheorem{theorem}{Theorem}
\newtheorem*{theorem*}{Theorem}

\NewEnviron{cproblem}[1]{%
\smallskip
{\centering \fbox{\parbox{0.97\columnwidth}{%
    {\centering\scshape #1\par}%
    \parskip=1ex
    \BODY
}}}\smallskip}

\begin{document}

\begin{frontmatter}

\title{Anticipating Responsibility in Multiagent Planning}

\author[A]{\fnms{Timothy}~\snm{Parker}\thanks{Corresponding Author. Email: timothy.parker@irit.fr}}
\author[A]{\fnms{Umberto}~\snm{Grandi}}
\author[A]{\fnms{Emiliano}~\snm{Lorini}}

\address[A]{IRIT, CNRS, University of Toulouse, France}

\begin{abstract}
Responsibility anticipation is the process of determining if the actions of an individual agent may cause it to be responsible for a particular outcome. This can be used in a multi-agent planning setting to allow agents to anticipate responsibility in the plans they consider. The planning setting in this paper includes partial information regarding the initial state and considers formulas in linear temporal logic as positive or negative outcomes to be attained or avoided. We firstly define attribution for notions of active, passive and contributive responsibility, and consider their agentive variants. We then use these to define the notion of responsibility anticipation. We prove that our notions of anticipated responsibility can be used to coordinate agents in a planning setting and give complexity results for our model, discussing equivalence with classical planning. We also present an outline for solving some of our attribution and anticipation problems using PDDL solvers.
\end{abstract}
\end{frontmatter}

\section{Introduction}

In any multi-agent setting, a key concept is that of responsibility. There are two main notions of responsibility, which are forward-looking and backward-looking responsibility \cite{Poel2011}. In general, forward-looking responsibility is to have an obligation to bring about or prevent a certain state of affairs, while backward-looking responsibility means to be held accountable for a particular action or state of affairs that occurred. Our paper considers only backward-looking responsibility, which is often used in multi-agent settings to determine appropriate sanctions or rewards for agents. While responsibility attribution is a well-studied problem \cite{AlechinaHL17,Baier0M21,BrahamVanHees,HalpernK18,Naumov021}, we focus on the novel concept of responsibility anticipation, which means to determine if a particular plan for a single agent \textit{may} lead to their responsibility for some outcome, given the possible plans of all other agents. We believe that by anticipating responsibility, agents will be better able to coordinate their actions even if they cannot communicate. We consider responsibility in a multi-agent setting with concurrent actions and where outcomes are described in Linear Temporal Logic over finite traces ($\ltllogic$)\cite{GiacomoV13}. Following the work of Lorini Et Al \cite{LoriniLM14} we recognise two key components to responsibility, namely the causal and agentive components. The causal component requires that the actions of the agent in some way contributed to the outcome in question. Lorini Et Al identify two different notions of causal responsibility, active and passive responsibility. We formalise both in our model as well as a notion of contributive responsibility defined by Braham and Van Hees \cite{BrahamVanHees}. Roughly speaking, given some state of affairs $\omega$, active responsibility means to bring about $\omega$, passive responsibility means to allow $\omega$ to occur, and contributive responsibility means to be part of a coalition that brings about $\omega$.  The agentive component requires that the agent is aware that their actions will (or in some cases may) contribute to the outcome. In our setting the agents have full knowledge of the action theory (i.e the capabilities of all agents), but are uncertain regarding the intended actions of other agents and the initial state of the world. This allows us to define agentive notions of active, passive and contributive responsibility.

While our model allows us to attribute responsibility retrospectively (after plan execution), the focus of our work is in anticipating responsibility to aid in plan selection for a single agent. Since agents often cannot be certain about the outcomes of their plans, we introduce a notion of anticipated responsibility, which can be applied to any of our previous notions of responsibility. We show that by minimising their anticipated responsibility for a negative outcome, agents are often capable of guaranteeing that the outcome does not occur, even in some cases where the agents cannot communicate and where no single agent can guarantee avoiding the negative outcome.

We intend for our model to be useful in real-world planning applications. This is why we have taken efforts to ensure that the our planning domain is reasonably compact while still being highly expressive. We also outline how our responsibility attribution and anticipation problems can be reduced to PDDL, both to demonstrate how pre-existing planning solvers can be applied to our problems and to encourage implementation of our model.

Our paper is organised as follows. Section \ref{relatedwork} situates our paper with reference to related work in responsibility attribution, and compares our work to several similar papers. Section \ref{sec:model} introduces our multi-agent planning domain and presents an explanatory example. Section \ref{sec:resp} formalises our notions of responsiblity attribution and anticipation and discusses their application to multi-agent planning. Section \ref{sec:theorems} gives the complexity results for our setting and an outline of a reduction to PDDL. Finally section \ref{sec:futurework} summarises the paper and outlines directions for future work.

\section{Related Work}\label{relatedwork}

This work contributes primarily to the field of formalised responsibility attribution. It also involves planning with temporally extended goals \cite{44,degiacomo2015,47}, but since we are not aware of any other work in planning that considers responsibility in plan selection, we will focus this section on responsibility. Our planning model builds on a number of previous papers which are discussed in section \ref{sec:model}.
 
Furthermore, responsibility anticipation and its application to planning agents, is, to the best of our knowledge, also novel in the field of responsibility formalisation. Therefore, we will focus on approaches to responsibility attribution in the literature, and discuss how and why they differ from our work.

One approach to formalising responsibility is the work of Alechina Et Al \cite{AlechinaHL17}, which is based on work by Chockler and Halpern \cite{ChocklerH04,Halpern15} on the formalisation of responsibility. Rather than using $\ltllogic$, as in our approach, this work uses structural equation modeling (SEM). Their paper focuses specifically on responsibility attribution for the failure of a previously-arranged joint plan, which is a specific sequence of tasks that all agents are expected to follow (but perhaps will not), making its application much more specific than our work. Unlike our model, the authors focus only on a single notion of responsibility, but it does model varying degrees of responsibility for different agents. Alechina Et Al also perform a complexity analysis of their model, showing that responsibility attribution is in general NP-Complete (in line with our notion of passive responsibility, see theorem \ref{thm:attributioncomplexity}) and identify some fragments where responsibility attribution is polynomial.

Halpern and Kleiman-Weiner \cite{HalpernK18} also use a structural equations model, but focuses on defining the intentions of agents given their actions and their epistemic state (given here as a probability distribution). As this paper does not address causal responsibility there is not much overlap with our model, but it does highlight several interesting concepts that we could attempt to incorporate in future work.

A more general but less compact approach is the work of Baier Et Al \cite{Baier0M21}. Their work covers both forward and backward-looking responsibility attribution, but we will focus on their formalisation of backward-looking responsibility. Whereas our model is based of classical planning, Baier Et Al use extensive form games with strategies instead of plans. This makes their model much less compact and more complex than ours, but also more expressive. In their work, a coalition of agents $J$ is causal backwards responsible for some outcome $\omega$ if fixing the strategies of all other agents (and the random choices of Nature) there exists a strategy for $J$ where $\omega$ does not occur in any possible execution. They also define strategic backward responsibility, which states that $\omega$ occurs, and there is some state in the execution where the coalition of agents have a strategy such that $\omega$ does not occur in any epistemically possible outcome for that strategy (since agents cannot distinguish between some states). Again, this model does not include any other notions of responsibility, but does model an agents degree of responsibility, which is determined by an agents membership to one or more responsible coalitions, meaning it behaves similarly to our notion of contributive responsibility, though defined on strategies instead of plans. Baier Et Al also provide a complexity result for their model. They note that the complexity of responsibility attribution is in NP, which is a lower bound than contributive responsibility in our model (see theorem \ref{thm:attributioncomplexity}), though our model is exponentially more compact.

A similar definition of responsibility exists in the work of Naumov and Tao \cite{Naumov021} whose setting of Imperfect Information Strategic Games is very close to our notion of planning domain, but restricted to plans of length 1. Their notion of blameworthiness says that $i$ is blameworthy for $\omega$ if $\omega$ occurrs and $i$ could have performed an action guaranteeing $\neg \omega$ in all possible states. This is a stronger version of our notion of causal passive responsibility, as we require only that $i$ could have avoided $\omega$ if the state and all actions of other agents were fixed. They also present a notion of ``seeing to it'' which requires that an agent guarantees in all possible worlds that $\omega$ occurs. This is very close to our notion of agentive active responsibility, the only difference being that in our model there must exist some possible history from the initial state where the outcome does not occur, whereas in their model that history can start at any epistemically possible state for $i$. Also, unlike us Naumov and Tao formalise their notions as operators in logic, allowing for the development of a proof system for these operators (they develop a proof system for their notion of blameworthiness in a previous, perfect-information setting \cite{NaumovT19}).

Our work is heavily inspired by the work of Lorini Et Al \cite{LoriniLM14}. This paper formalises the notions of active and passive responsibility that we use in this paper, as well as the variant of agentive responsibility. The model in this paper is based on STIT logic in a multi-agent setting with Kripke possible worlds. We extend this work to the setting of multi-agent planning, though for simplicity we do not model agents having knowledge of the possible actions of other agents, in our setting all plans of the other agents are considered possible.

Our work is also related to the work of Braham and van Hees \cite{BrahamVanHees}, who analyse responsibility in a game-theoretic framework. One of the conditions for moral responsibility is that an agent's actions must have ``causally contributed'' to the outcome in question. We adapt the notion of causal contribution into our setting as a third notion of causal responsibility.

\section{Model}\label{sec:model}

In this section we introduce the planning framework in which we will define our notions of responsibility. As many of our definitions are drawn from existing literature, in the interests of space we have chosen to omit some of the less informative formal definitions, which can be found in the supplementary material for this paper. We will indicate where we have done this.

\subsection{Agents, Actions and Histories}

The building blocks of our model are a finite set of agents $\agentset$ and a countable set of propositions $\propset = \{p, q, \ldots\}$. From $\propset$ we define a set of states $\stateset = 2^{\propset }$, with elements $s,s', \ldots$ Let $\actset = \{a, b , \ldots\}$ be a finite non-empty set of action names.

To trace the actions of agents and changing states over time we define a $k$-history to be a pair $\history= (\historyst, \historyact)$ with $\historyst: [0,k] \longrightarrow \stateset$ and $\historyact: \agentset \times [1,k] \longrightarrow \actset$. The set of $k$-histories is noted $\historyset{k}$. The set of all histories is $\historyset{}=\bigcup_{k \in \nat }\historyset{k}$.

\subsection{Multi-Agent Action Theory}\label{sec:actions}

Given the actions performed by an agent, we need to be able to determine the effects of those actions. We favour a compact action theory based on situation calculus \cite{sitcalc}.

We first define $\langlogic_{\proplogic+}$ (propositional logic with action descriptions) as follows:
\begin{center}\begin{tabular}{lcl}
  $\phi $ &  $\bnf$ & $ p \mid do(i,a)  \mid  \neg\phi \mid \phi  \wedge \phi$
\end{tabular}\end{center}
with 
$p$ ranging  over $\propset$, $i$ ranging over $\agentset$ and $a$ ranging over $\actset$. Atomic formulas in this language are those that consist of a single proposition $p$ or a single instance of $do(i,a)$.

Semantic
interpretation of
formulas in
$\langlogic_{\proplogic+}$ is performed
relative to a $k$-history 
$\history \in \historyset{}$ and a time point
$t \in \{0,\ldots,k\}$ and as follows (we omit
boolean cases which are defined as usual):
\begin{alignat*}{2}
  \history,  t &\models  p  & ~\IFF~ & 
  p \in \historyst(t) , \\
  \history, t &\models do(i,a) & ~\IFF~ & t < k \AND \historyact(i,t) = a
\end{alignat*}
We define our action theory as a pair of a positive and negative effect precondition function
$\effprecond=(\effprecondplus,\effprecondminus)$, where $ \effprecondplus:
\agentset \times \actset \times \propset
\longrightarrow \langlogic_{\proplogic}$
and  $\effprecondminus :
\agentset \times \actset \times \propset
\longrightarrow \langlogic_{\proplogic}$. If the formula $\effprecondplus( i,a,p )$ 
holds in a state where action $a$ is executed by agent $i$, proposition $p$
will be \emph{true} in the next state (provided no other action interferes). Similarly, 
$\effprecondminus(i,a,p)$ 
guarantees that $p$ will be \emph{false} in the next state if action $a$ is executed by $i$ (without interference).
In case of conflicts between actions, we use an intertial principle: if two or more actions attempt to enforce different truth values for $p$, then the truth value of $p$ does not change.

If we want to signal that action $a$ is not available to agent $i$ we can simply set $\effprecondplus(i,a,p) = \effprecondminus(i,a,p) = \bot$ for all $p \in \propset$. We assume the existence of a ``do nothing'' action $\skipact$, defined such that $\effprecondplus(i,\skipact,p) = \effprecondminus(i, \skipact, p) = \bot$ for all $i$ and $p$.

We say that history $\history$ is a $\effprecond$-compatible history for action theory $\effprecond=(\effprecondplus,\effprecondminus)$ if each state respects the actions performed in the previous state. The set of $\effprecond$-compatible histories is noted $\historyset{}(\effprecond)$. A full formal definition can be found in the supplementary material. 

\subsection{Compactness of our Action Theory}

A conceptually simpler equivalent to our notion of action theory is a state transition function \cite{Keller76} $\tau: \stateset \times \actset^\agentset \longrightarrow \stateset$. This takes as input the current state and the actions of all agents, and outputs the next state. Since there are no limitations on what states the function can output (besides functionality), it is straightforward to see that for any deterministically consistent history $\history$ (meaning the same joint action in the same state always leads to the same outcome), there is some state transition function $\tau$ that can be used to generate $\history$ given the start state and the actions of all agents.

However, we can show that our action theory is equally as expressive as any state transition function, and strictly more succinct, and this is achieved by our use of action descriptions.

\begin{proposition}
Given a state transition function $\tau$, there exists an action theory $\effprecond$ that is equivalent to (generates the same histories as)
$\tau$ and is at worst polynomially larger in size.
\end{proposition}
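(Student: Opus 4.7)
The plan is to explicitly encode $\tau$ into the effect precondition functions, exploiting the $do(j,a_j)$ atoms of the language so that each formula can describe the full joint action context. For each triple $(i,a,p)\in\agentset\times\actset\times\propset$, I set
\[
\effprecondplus(i,a,p)\ :=\ \bigvee_{\substack{s\in\stateset,\,\vec{a}\in\actset^\agentset\\ a_i=a,\,p\in\tau(s,\vec{a})}}\Bigl(\phi_s\wedge\bigwedge_{j\neq i}do(j,a_j)\Bigr),
\]
where $\phi_s:=\bigwedge_{q\in s}q\wedge\bigwedge_{q\notin s}\neg q$ is the characteristic propositional formula of state $s$, and $\vec{a}=(a_1,\ldots,a_{|\agentset|})$ ranges over joint actions. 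The function $\effprecondminus(i,a,p)$ is defined symmetrically, taking the disjunction over the same pairs but with the condition $p\notin\tau(s,\vec{a})$.

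Next I verify correctness. Pick any state $s$, any joint action $\vec{a}$, and any proposition $p$. By construction, for every agent $i$, the formula $\effprecondplus(i,a_i,p)$ evaluates to true in this context iff $p\in\tau(s,\vec{a})$, and $\effprecondminus(i,a_i,p)$ evaluates to true iff $p\notin\tau(s,\vec{a})$. Therefore every agent casts the same vote on $p$: either all positive or all negative. No conflict arises, so the inertial principle is never triggered, and the next value of $p$ is exactly $\tau(s,\vec{a})(p)$. Iterating this argument across time steps shows that the set of $\effprecond$-compatible histories coincides with the set of histories generated by $\tau$.

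For the size bound, the action theory contains $|\agentset|\cdot|\actset|\cdot|\propset|$ formulas, each a disjunction of at most $|\stateset|\cdot|\actset|^{|\agentset|-1}$ conjunctions of size $O(|\propset|+|\agentset|)$. The total size is thus $O(|\stateset|\cdot|\actset|^{|\agentset|}\cdot|\propset|^2\cdot|\agentset|)$, while any explicit representation of $\tau$ already has size $\Omega(|\stateset|\cdot|\actset|^{|\agentset|}\cdot|\propset|)$, so $\effprecond$ is only polynomially larger than $\tau$.

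The main obstacle I anticipate is ruling out unintended behaviour from the voting semantics, especially in cases where $\tau$ leaves some $p$ unchanged. The key insight resolving this is that forcing every agent to encode the entire transition yields unanimous voting for every proposition, which eliminates the only source of inertia (namely conflicting votes) and guarantees that the induced next state agrees with $\tau(s,\vec{a})$ on every proposition.
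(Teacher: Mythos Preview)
Your proposal is correct and follows essentially the same approach as the paper: encode the full state and joint action into every effect precondition so that all agents vote unanimously on each proposition, which bypasses the inertial tie-breaking and forces the next state to agree with $\tau$. The only cosmetic difference is that you restrict each disjunction to joint actions with $a_i=a$ and drop the redundant $do(i,a_i)$ conjunct, whereas the paper takes the disjunction over all $(s,\vec a)$ pairs uniformly; both yield the same semantics and the same polynomial size bound.
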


\begin{proposition}
There exists some state transition function $\tau_1$ such that any action theory $\effprecond$ that is equivalent to $\tau_1$ must contain $\does{i}{a}$ in its description.
\end{proposition}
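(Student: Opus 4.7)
The plan is to exhibit a small state transition function $\tau_1$ whose one-step behaviour depends on a \emph{joint} combination of several agents' actions in a way that no $\does{i}{a}$-free action theory can reproduce. Intuitively, without $\does{i}{a}$ each triple $(i,a,p)$ contributes at most a ``want $p$ true'', ``want $p$ false'' or ``abstain'' vote as a function of the current state alone, and the inertial principle aggregates these votes; a transition function that is not in the image of this aggregator must therefore force the use of $\does{i}{a}$ in the effect precondition formulas.

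Concretely, I take $\agentset = \{1,2\}$, $\actset = \{a, \skipact\}$, $\propset = \{p\}$, and define $\tau_1$ so that from the empty state the next value of $p$ is the XOR of whether each agent plays $a$: $\tau_1(\emptyset,(a,\skipact)) = \tau_1(\emptyset,(\skipact,a)) = \{p\}$, while $\tau_1(\emptyset,(a,a)) = \tau_1(\emptyset,(\skipact,\skipact)) = \emptyset$; transitions from $\{p\}$ may be chosen arbitrarily, e.g.\ as the identity. Suppose for contradiction that some $\effprecond = (\effprecondplus,\effprecondminus)$ equivalent to $\tau_1$ uses only $\does{i}{a}$-free formulas, and evaluate $\effprecondplus(i,a,p)$ and $\effprecondminus(i,a,p)$ at the empty state. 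Because $\skipact$ has vacuous preconditions, the transition $(a,\skipact) \mapsto \{p\}$ forces $\effprecondplus(1,a,p)$ to hold at $\emptyset$ and $\effprecondminus(1,a,p)$ not to hold there, since every other combination would leave $p$ unchanged by inertia. Symmetrically, $(\skipact,a) \mapsto \{p\}$ forces the analogous conditions on agent $2$. But then at $\emptyset$ under the profile $(a,a)$ both agents cast a ``want $p$ true'' vote and nobody casts a ``want $p$ false'' vote, so the action theory is forced to produce $\{p\}$, contradicting $\tau_1(\emptyset,(a,a)) = \emptyset$.

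The main delicate point will be pinning down the conflict semantics so that no loophole remains. In particular, I need to confirm that when $\effprecondplus(i,a,p)$ and $\effprecondminus(i,a,p)$ are both satisfied in the current state the inertial principle still applies, so that this internal conflict cannot be exploited to silently match one of the singleton cases above, and I need to rule out attempts to recover equivalence by introducing auxiliary propositions outside $\propset$, which would change the signature of the action theory and so fall outside the scope of the impossibility claim. Once these subtleties are settled, the four-case analysis above closes the proof.
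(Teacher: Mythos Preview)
Your argument is correct and rests on the same core idea as the paper's proof: exhibit a two-agent, one-proposition transition function whose one-step behaviour on $p$ is a \emph{non-monotone} Boolean function of the two agents' chosen actions, and observe that a $\does{i}{a}$-free action theory can only make each $(i,a,p)$ cast a state-dependent ``set true'' / ``set false'' / ``abstain'' vote, which the inertial aggregator cannot turn into such a function.

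The route differs in two ways. First, the paper takes $\actset=\{a_1,a_2\}$ with no distinguished action and sets $p$ to the XNOR of the two choices, whereas you take $\actset=\{a,\skipact\}$ and use XOR. Second, and more substantively, you exploit the model's convention that $\effprecondplus(i,\skipact,p)=\effprecondminus(i,\skipact,p)=\bot$: this pins down the ``votes'' of $\skipact$ immediately and lets you derive $\emptyset\models\effprecondplus(1,a,p)$, $\emptyset\not\models\effprecondminus(1,a,p)$ (and symmetrically for agent~2) directly from the singleton transitions, closing the proof in one step. The paper instead has to first argue that $\effprecondplusminus(i,a,p)\in\{\bot,p,\neg p\}$ (ruling out $\top$ by appealing to inertia on both start states), then chase two cases to reach a contradiction. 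Your version is shorter and arguably cleaner; the paper's version is slightly more self-contained in that it does not rely on the $\skipact$ convention being part of the action-theory specification. Both are valid under the paper's definitions.

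Your two flagged subtleties are handled: the conflict case where both $\effprecondplus(1,a,p)$ and $\effprecondminus(1,a,p)$ hold at $\emptyset$ is already excluded by the transition $(a,\skipact)\mapsto\{p\}$ (inertia would leave $p$ false), so no loophole arises; and the auxiliary-proposition concern is a genuine scope issue that the paper's proof leaves equally implicit, so your explicit restriction to the fixed signature is appropriate.
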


Note that the size of $\tau$ is always exponential in the size of $\propset$ and $\agentset$, since the number of entries in $\tau$ are fixed. On the other hand, entries for $\effprecond$ can in be as small as constant size (for example $\effprecondplusminus(i,a,p) \in \{\top, \bot\}$). This means $\effprecond$ can be as small as $2 \times |\actset| \times |\agentset| \times |\propset|$. We conjecture that in most applications for this planning model, the action theory $\gamma$ will be polynomial in size in $\propset$, $\agentset$ and $\actset$.

\subsection{Planning Domains with Partial Information}

We can now define our notion of planning domain. For now, we simply define a space where agents can create and execute plans, and where the outcomes of those plans can be determined. Since our planning domain includes partial information we make use of epistemic equivalence sets. An epistemic equivalence set $\epequiv_i \subseteq \stateset$ is the set of possible start states from the perspective of agent $i$.

\begin{definition}[Partial Information Multi-Agent Planning Domain]
A Partial information multi-agent Planning Domain (PPD) is a tuple $\pldomain=(\effprecond, s_0,(\epequiv_i)_{i\in \agentset})$ where $\effprecond=(\effprecondplus,\effprecondminus)$ is an action theory, $s_0$ is an initial state, and for each $i \in \agentset$, $\epequiv_i$ is the epistemic equivalence set for $i$.
\end{definition}

Our notion of an epistemic equivalence sen, is straightforward and very general, but not very compact. A more compact alternative would be to give each agent visibility of a certain subset of the propositions in $\propset$ \cite{Torreno15}. However, this would be less general as not all epistemic equivalence sets can be expressed in terms of visibility. A more complex but much more general approach would be to give each agent a belief base $\belequiv_i$ as a set of formulas of $\langlogic_{\proplogic}$ that describes the beliefs of $i$ regarding the initial state \cite{Lorini20Epistemic}. We prefer epistemic equivalence sets as this is the simplest notion for defining algorithms. Furthermore, any of the above methods will induce an epistemic equivalence set, meaning our model can easily be adapted to other systems.

\begin{example}[Crossing a Junction]\label{ex:joint}

The planning domain $\pldomain_E$ models an autonomous vehicle (Agent 1) approaching a junction. Agent 1 knows that there is a second vehicle (Agent 2) near the junction, but does not know if Agent 2 has crossed the junction. Each vehicle can either go straight on ($\actF$), or do nothing ($\skipact$).
\begin{figure}[h]
\begin{center}
\includegraphics[scale = 0.2]{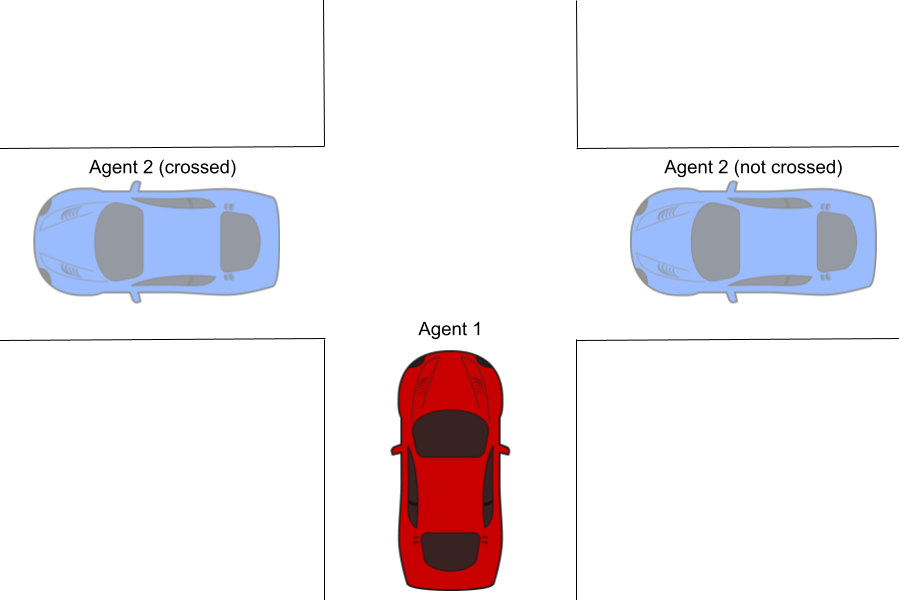}
\caption{A visual representation of $\pldomain_E$ (Example~\ref{ex:joint}), showing the start position of Agent 1 and the two possible positions of Agent 2 (crossed or not crossed the junction).}\label{figure2}
\end{center}
\end{figure}

The example is formally defined as follows:
\begin{itemize}
    \item $\agentset = \{A1, A2\}$
    \item $\propset = \{\crossed{1}, \crossed{2}, \iscollision\}$
    \item $\actset = \{\actF, \skipact\}$
    \item $s_0 = \emptyset$, $s_1 = \{\crossed{2}\}$
    \item $(\epequiv_{1})  = \{s_0, s_1\}$
\end{itemize}

The action theory for our example is defined as follows, note that we have already defined the preconditions for $\skipact$ in section \ref{sec:actions}:
\begin{align*}
    \effprecondplus(A1, \actF,\crossed{1}) = &\neg (\neg \crossed{2} \land \does{A2}{\actF})\\
    &\land \neg \iscollision\\
    \effprecondplus(A1, \actF, \iscollision) = &\neg \crossed{1} \land \neg \crossed{2}\\
    &\land \does{A2}{\actF}\\
    \effprecondplus(A2, \actF,\crossed{2}) = &\neg (\neg \crossed{1} \land \does{A1}{\actF})\\
    &\land \neg \iscollision\\
    \effprecondplusminus(i, \actF,p) = &\bot \text{ unless stated otherwise above.}
\end{align*}

In words, if exactly one agent attempts to cross the junction ($\actF$) then they will succeed. If both agents perform $\actF$ at the same time then they will collide, which will prevent either from being able to move.
\end{example}


 \subsection{Action Sequences and Joint Plans}

Now that we have defined a planning domain, we can define the notions of action sequence and plan.
Given $k \in \nat$, a $k$-action-sequence is a function 
\begin{align*}
    \actseq: \{0, \ldots, k-1\} \longrightarrow \actset.
\end{align*}
The set of $k$-action-sequences is noted $\seqset{k}$. The set of all action sequences is $\seqset{}=\bigcup_{k \in \nat }\seqset{k}$. For a (non-empty) coalition of agents $J \in 2^{\agentset} \setminus \emptyset$ we can define a joint $k$-plan as a function $\plan: J \longrightarrow \seqset{k}$ (if $J$ is a singleton coalition we call $\plan$ an individual plan). The set of joint $k$-plans for a coalition $J$ is written $\jointplanset{k}{J}$. The set of all joint plans for $J$ is $\jointplanset{}{J} = \bigcup_{k \in \nat } \jointplanset{k}{J}$.

Given a joint plan $\plan$ for coalition $J$ and another coalition $J' \subseteq J$, we can write the sub-plan of $\plan$ corresponding to $J'$ as $\plan^{J'}$, we can also write $\plan^{-J'}$ for sub-plan corresponding to $J\setminus J'$. Given two $k$-plans $\plan_1$ and $\plan_2$ for disjoint coalitions $J_1, J_2$, we write $\plan_1 \cup \plan_2$ for the joint plan for $J_1 \cup J_2$ such that $(\plan_1 \cup \plan_2)^{J_1} = \plan_1$ and $(\plan_1 \cup \plan_2)^{J_2} = \plan_2$. Finally, given two plans $\plan_1$ and $\plan_2$, if there exists some plan $\plan_3$ such that $\plan_2 = \plan_1 \cup \plan_3$ then we say that $\plan_1$ \textit{is compatible with} $\plan_2$.

We can now define the notion
of the history generated by a joint $k$-plan $\plan$ at an initial
state $s_0$ under the action theory $\effprecond$. It is the $\effprecond$-compatible $k$-history 
along which the agents jointly execute the plan
$\plan$ starting at state $s_0$. We write this as $\history^{\plan,s_0,\effprecond}$

\subsection{Linear Temporal Logic}

In our model histories are temporal entities that are always finite in length, therefore the most natural choice to describe properties of histories is Linear Temporal Logic over Finite Traces \cite{GiacomoV13,degiacomo2015}. This allows us to describe temporal properties such as ``$\phi$ never occurs'' or ``$\phi$ always occurs immediately after $\psi$''. We write the language as $\langlogic_{\ltllogic}$, defined by the following grammar:
\begin{center}\begin{tabular}{lcl}
  $\phi $ &  $\bnf$ & $ p \mid do(i,a)  \mid  \neg\phi \mid \phi  \wedge \phi   \mid \nexttime \phi \mid
\until { \phi   } { \phi    },  $\\
\end{tabular}\end{center}
with 
$p$ ranging  over $\propset$, $i$ ranging over $\agentset$ and $a$ ranging over $\actset$. Atomic formulas in this language are those that consist of a single proposition $p$ or a single instance of $do(i,a)$.
$\nexttime$
and $\until {     } {     }$
are the  operators
``next''
and ``until'' of $\ltllogic$. 
Operators
``henceforth'' ($\henceforth$)
and ``eventually'' ($\eventually$)
are defined in the usual way:
$\henceforth \phi \defin \neg ( \until {   \top   } {  \phi     }) $
and 
$\eventually \phi  \defin \neg  \henceforth \neg  \phi $. We define the semantics for $\nexttime$ and $\until{}{}$ as follows, the rest is the same as $\langlogic_{\proplogic+}$.
\begin{alignat*}{2}
  \history,  t &\models \nexttime \phi & ~\IFF~ & t < k \AND \history,  t+1 \models   \phi, \\
   \history,  t &\models \until { \phi_1    } { \phi_2    }   & ~\IFF~ &
    \begin{aligned}[t]
      &\exists t' \geq t :   t' \leq k \AND \history,  t' \models \phi_2 \AND\\
      &\forall t''   \geq t  :   \IF 
      t'' < t' ~\THEN~ \history,  t'' \models \phi_1.
    \end{aligned} 
\end{alignat*}
\section{Formalising Responsibility}\label{sec:resp}

In order to define responsibility anticipation, we must first define the responsibility attribution. Responsibility attribution is a backward-looking notion where, given some fixed history, we seek to determine which agents are responsible for some particular outcome. We distinguish between ``agentive'' and merely ``causal'' forms of responsibility. For an agent $i$ to be causally responsible for some outcome $\omega$ simply means that the actions of $i$ were in some way a causal factor in the occurrence of $\omega$. Agentive responsibility requires the additional condition that $i$ \textit{knew} that its actions could or would lead to $\omega$.

Another common notion of responsibility is that of moral responsibility, which is the kind of responsibility that typically merits praise or blame. We do not attempt to formalise moral responsibility in this paper as it is an extremely complex notion, and there is widespread disagreement in the literature regarding exactly what the criteria for moral responsibility are \cite{sep-moral-responsibility}. That said, we do believe that agentive responsibility is a necessary (but not sufficient) condition for moral responsibility. 

\subsection{Causal Responsibility}

To be causally responsible for an outcome roughly means to have causally contributed to that outcome occurring. Two main notions of causal responsibility are active and passive responsibility. To be actively responsible means to directly cause the outcome, i.e to act in a way that guarantees the outcome will occur. To be passively responsible means to allow an outcome to occur while having the ability to prevent it. Our definitions of active and passive responsibility are based on the work of Lorini Et Al \cite{LoriniLM14}, but adapted for a multi-agent planning domain.

\begin{definition}[Active Responsibility]\label{def:a-blame}
Let $\pldomain=(\effprecond, s_0,(\epequiv_i)_{i\in \agentset})$ be a PPD, $i \in \agentset$ an agent, and $\plan_1$ a joint plan. Let $\omega \in \langlogic_{\ltllogic}$. Then, we say that $i$ bears \emph{Causal Active Responsibility} (CAR) for $\omega$ in $(\plan_1,s_0,\effprecond)$, if $\history^{\plan_2,s_0,\effprecond} \models \omega$ for all $\plan_2$ compatible with $\plan_1^{\{i\}}$ and there exists some joint plan $\plan_3 \in \jointplanset{}{\agentset}$ such that $\history^{\plan_3,s_0,\effprecond} \not \models \omega$.
\end{definition}

Where $s_0$ and/or $\effprecond$ are obvious from context, they are omitted from the statement ``$i$ bears CAR for $\omega$ in $(\plan_1,s_0,\effprecond)$'' In words, an agent $i$ is causally actively responsible for the occurrence of $\omega$ if, keeping fixed the initial state and the actions of $i$, the other agents could not have acted differently and prevented the occurrence of $\omega$. Note that active responsibility requires that the outcome does not occur in all possible plans. This means that an agent cannot be actively responsible for something that was inevitable, such as the sun rising in the morning. This corresponds to the deliberative STIT operator of Horty and Belnap \cite{HortyDstit}.

\begin{definition}[Passive Responsibility]\label{def:p-blame}
Let $\pldomain = (\effprecond, s_0, (\epequiv_i)_{i\in \agentset})$ be a PPD, $i \in \agentset$ an agent, and $\plan_1$ a joint plan. Let $\omega \in \langlogic_{\ltllogic}$. Then, we say that $i$ bears \emph{Causal Passive Responsibility} (CPR) for $\omega$ in $(\plan_1,s_0,\effprecond)$ if $\history^{\plan_1,s_0,\effprecond} \models \omega$ and there exists some $\plan_2$ compatible with $\plan_1^{-\{i\}}$ such that $\history^{\plan_2,s_0,\effprecond} \not \models \omega$.
\end{definition}

An agent $i$ is passively responsible for some outcome $\omega$ if, keeping fixed the initial state and the actions of all other agents, it could have acted differently and prevented the occurrence of $\omega$. 

Passive and active responsibility can fail in cases of causal overdetermination. For example: suppose three men push a car off a cliff. Since the car is heavy, two of them are needed to successfully push the car, meaning no agent is actively responsible. Since any one man could have stopped pushing without changing the outcome, no man is passively responsible. Nonetheless it intuitively seems that each man is at least somewhat responsible. Therefore, we introduce the notion of contributive responsibility based on the work of Braham and van Hees \cite{BrahamVanHees}, which is a more general notion of causal responsibility.

\begin{definition}[Contributive Responsibility]\label{def:c-blame}
Let $\pldomain = (\effprecond, s_0, (\epequiv_i)_{i\in \agentset})$ be a PPD, $i \in \agentset$ an agent, and $\plan_1$ a joint plan. Let $\omega \in \langlogic_{\ltllogic}$. Then, we say that $i$ bears \emph{Causal Contributive Responsibility} (CCR) for $\omega$ in $(\plan_1,s_0,\effprecond)$ if $\history^{\plan_1,s_0,\effprecond} \models \omega$ and there exists some coalition of agents $J$ such that $i \in J$ and for all $\plan_2$ compatible with $\plan_1^J$, $\history^{\plan_2,s_0,\effprecond} \models \omega$ and there exists some $\plan_3$ compatible with $\plan_1^{J \setminus \{i\}}$ such that $\history^{\plan_3,s_0,\effprecond} \not \models \omega$.
\end{definition}

In words, an agent $i$ is contributively responsible for $\phi$ if it is part of some coalition of agents $J$ such that:
a) the actions of $J$ were sufficient to guarantee $\phi$; and b) the actions of $J \setminus \{i\}$ were not sufficient to guarantee $\phi$. In terms of STIT this can be written as ``$\exists J \subseteq \agentset \suchthat i \in J \land STIT_J \omega \land \neg STIT_{J\setminus\{i\}}\omega$''.


A notable property of Causal Contributive Responsibility is that it is ``complete''. This means that for any outcome that occurs in a plan, either that outcome was inevitable or there is at least one agent who is responsible (i.e bears CCR) for that outcome.

\begin{theorem}\label{thm:completeness}
Let $\pldomain=(\effprecond, s_0,(\epequiv_i)_{i\in \agentset})$ be a PPD, let $\plan$ be a joint plan and let $\history = \history^{\plan,s_0,\effprecond}$. Let $\omega \in \langlogic_{\ltllogic}$ such that $\history \models \omega$. Then either $\history' \models \omega$ for every history compatible with $\pldomain$, or there exists some $i \in \agentset$ such that $i$ bears CCR for $\omega$ in $\plan$.
\end{theorem}

Another important property of our notions of responsibility is that no agent can be held causally responsible (for any form of causal responsibility) for an outcome that was inevitable (i.e occurs in every possible joint plan). This is because all three notions of responsibility require the existence of a joint plan where $\omega$ does not occur.


\subsection{Agentive Responsibility}

To bear agentive responsibility for an outcome, an agent must know that their actions will (or in some cases may) be causally responsible for the outcome occurring. Specifically, we consider the epistemic state of the agent where they have decided their own actions, but do not yet know the actions of others.

\begin{definition}[Agentive Active Responsibility]\label{def:m-blame}
Let $\pldomain=(\effprecond, s_0,(\epequiv_i)_{i\in \agentset})$ be a PPD, $i \in \agentset$ an agent, and $\plan_1$ a joint plan. Let $\omega \in \langlogic_{\ltllogic}$. Then, we say that $i$ bears \emph{Agentive Active Responsibility} (AAR) for $\omega$ in $(\plan_1,s_0,\effprecond)$ if $i$ is actively responsible for $\omega$ in $\plan_1$ and for every $\plan_2$ compatible with $\plan_1^{\{i\}}$, and every $s_1 \in \epequiv_i$, $\history^{\plan_2,s_1,\effprecond} \models \omega$.
\end{definition}

Agent $i$ bears agentive active responsibility for $\omega$ if their actions were sufficient to guarantee $\omega$ in any possible outcome (given the possible start states and possible actions of other agents). Furthermore, as with CAR, there must be some joint plan from $s_0$ where $\omega$ does not occur.

Since passive and contributive responsibility both include the notion of ``allowing'' something to happen rather than ``forcing'' it to happen, the outcome does not need to be guaranteed from the perspective of the agent, but merely possible. This means that the notions of agentive passive and agentive contributive responsibility are both equivalent to their causal definitions, as we assume that agents have full knowledge of the action theory and always consider the true initial state to be epistemically possible, meaning any actual outcome $\omega$ must have been considered possible from the perspective of every agent. Therefore note that the acronyms CPR and CCR refer to both the causal \textit{and} agentive variants of passive and causal responsibility.

A more intuitive notion of agentive passive and contributive responsibility would be to say that $\omega$ must be \textit{reasonably likely} from the perspective of $i$ rather than merely ``possible''. However, since our model contains no notion of probability, plausibility, or knowledge of the actions of other agents, this is not currently possible, though it does present a direction for future iterations of this model.

\begin{example}[Crossing a Junction - continued]
Consider the following joint plan $\plan_1$ from start state $s_0$:
\begin{align*}
A1 : & [1 \mapsto \actF, 2 \mapsto \actF], A2 : [1 \mapsto \actF, 2 \mapsto \actF]
\end{align*}

This will result in a collision. Agent 1 bears CPR (and also CCR) for the negation of the goal requiring that the two cars never collide ($\omega_1 = \henceforth \neg \iscollision$) since in this case A1 could have avoided a collision by waiting for one step before moving (i.e $A1 : [1 \mapsto \skipact, 2 \mapsto \actF]$). However, since Agent 2 also could have waited to avoid a collision, Agent 1 is not actively responsible. Consider an alternative plan where each agent is more cautious:
\begin{align*}
A1 : & [1 \mapsto \skipact, A2 \mapsto \skipact], A2 : [1 \mapsto \skipact, 2 \mapsto \skipact]
\end{align*}
In this case Agent 1 bears CAR and AAR for the negation of the goal that Agent 1 eventually crosses the road ($\omega_2 = \eventually \crossed{1}$), since $\neg \omega_2$ occurs in any history compatible with the actions of Agent 1 in $\plan_2$ starting from $s_0$ or $s_1$.
\end{example}
\subsection{Anticipating Responsibility}\label{sec:anticipating}
Responsibility attribution is defined on known joint plans and known initial states. Therefore it cannot be used in planning for single agents, for whom the actions of the other agents and the initial state are unknown. However, an agent can always know if it is \textit{potentially} responsible for that outcome, namely if there is some possible history compatible with that plan where they are responsible.\footnote{We could also consider anticipation with universal instead of existential quantification, but being responsible in \textit{every} possible history is a very strong notion and we have not found much use for it.}
\begin{definition}[Anticipated Responsibility]
Let $\pldomain = (\effprecond, s_0, (\epequiv_i)_{i \in \agentset})$ be a PPD, $i \in \agentset$ an agent, and $\plan$ an individual plan. Let $\omega \in \langlogic_{\ltllogic}$ and $X$ a form of responsibility (CAR, CPR, CCR, AAR). Then, we say that $i$ anticipates $X$ for $\omega$ in $(\plan,\pldomain)$ if there is some $s_1 \in \epequiv_i$ and some joint plan $\plan_1$ compatible with $\plan$ such that is $i$ bears X for $\omega$ in $(\plan_1, s_1)$.
\end{definition}
We will now show the logical implications between our different forms of responsibility. The horizontal arrows indicate that in any joint plan $\plan$ where $i$ is attributed some form of responsibility, $i$ can anticipate that form of responsibility in the individual plan $\plan^{\{i\}}$. 
\begin{theorem}\label{thm:figure}
The implications shown in figure \ref{figure3} are correct.
\end{theorem}
\begin{figure}[h!]
\begin{center}
\includegraphics[scale = 0.35]{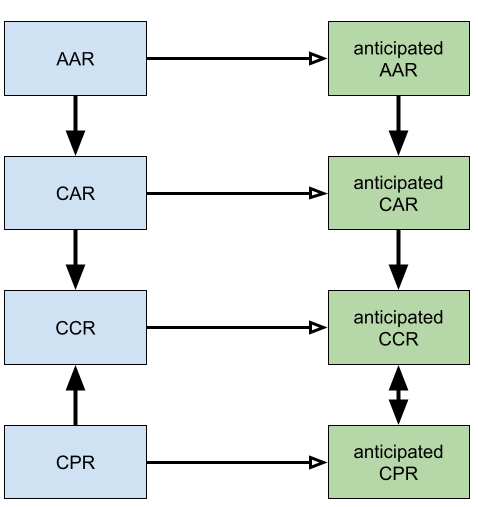}
\caption{A visual representation of the implications between our different forms of responsibility.}\label{figure3}
\end{center}
\end{figure}

Instead of giving a singular modular definition for anticipated responsibility, we can instead give seperate definitions for each notion. For example, consider the following equivalent definition for anticipated Agentive Active Responsibility:

\begin{definition}
Let $\pldomain = (\effprecond, s_0, (\epequiv_i)_{i \in \agentset})$ be a PPD, $i \in \agentset$ an agent, and $\plan$ an individual plan. Let $\omega \in \langlogic_{\ltllogic}$. Then, we say that $i$ anticipates $AAR$ for $\omega$ in $(\plan,\pldomain)$ if for all $s_1 \in \epequiv_i$ and all joint plans $\plan_1$ compatible with $\plan$, $\history^{\plan_1,s_1,\gamma}\models \omega$ and there is some $s_2 \in \epequiv_i$ and some joint plan $\plan_2$ such that $\history^{\plan_2,s_2,\gamma}\models \neg \omega$.
\end{definition}

However, we prefer our modular definition as it emphasises that we have a single notion of anticipated responsibility.

\subsection{Responsibility Anticipation in Plan Selection}
As previously stated, our hypothesis is that anticipating responsibility can help agents to coordinate towards a common goal, even without communication. Given some goal or value $\phi$, agents should avoid active responsibility for $\neg \phi$. This means performing a plan that does not anticipates AAR for $\neg \phi$. Furthermore, we prove that there is always a plan that does not anticipate AAR for $\neg \phi$. This means that artificial agents can be formally verified to never be potentially actively responsible for the violation of some value. This could be a useful step in creating provably safe autonomous planning agents.

\begin{theorem}
Let $\pldomain = (\effprecond, s_0, (\epequiv_i)_{i\in \agentset})$ be a PPD, $i \in \agentset$, and $\omega$ an $\ltllogic$-formula. Then there exists some individual plan $\plan$ for $i$ such that $i$ does not anticipate AAR for $\omega$ in $\plan$.
\end{theorem}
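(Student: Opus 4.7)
The plan is to split on whether $\omega$ can be avoided from any initial state that $i$ considers epistemically possible. Write $(\star)$ for the statement that there exist some $s^{*} \in \epequiv_i$ and some joint plan $\plan^{*} \in \jointplanset{}{\agentset}$ with $\history^{\plan^{*}, s^{*}, \effprecond} \not\models \omega$.

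If $(\star)$ fails, then for every $s_1 \in \epequiv_i$ and every joint plan $\plan'$ we have $\history^{\plan', s_1, \effprecond} \models \omega$. The second clause of CAR demands a joint plan whose history does \emph{not} satisfy $\omega$ at the reference initial state; since AAR subsumes CAR, no pair $(\plan_1, s_1)$ with $s_1 \in \epequiv_i$ can witness $i$ bearing AAR for $\omega$. Hence anticipated AAR vacuously fails for every individual plan, and any choice of $\plan$ works.

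If $(\star)$ holds, fix witnesses $s^{*}$ and $\plan^{*}$, and take the candidate plan $\plan := (\plan^{*})^{\{i\}}$. The claim is that $i$ does not anticipate AAR for $\omega$ in $\plan$. Suppose, toward contradiction, that there were $s_1 \in \epequiv_i$ and a joint plan $\plan_1$ compatible with $\plan$ (so $\plan_1^{\{i\}} = \plan$) witnessing $i$ bearing AAR for $\omega$ in $(\plan_1, s_1)$. The agentive clause of AAR forces $\history^{\plan_2, s, \effprecond} \models \omega$ for every $\plan_2$ compatible with $\plan_1^{\{i\}} = \plan$ and every $s \in \epequiv_i$. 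Instantiating with $\plan_2 = \plan^{*}$ (which is compatible with $\plan$ by construction, since $(\plan^{*})^{\{i\}} = \plan$) and $s = s^{*}$ yields $\history^{\plan^{*}, s^{*}, \effprecond} \models \omega$, contradicting the choice of $s^{*}, \plan^{*}$.

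The main obstacle is purely notational: one must keep clear the distinction between the reference state $s_1$ appearing in the causal clause of CAR/AAR and the state $s$ universally quantified in AAR's epistemic clause, and remember that the joint plan $\plan_1$ in the definition of anticipated responsibility is constrained only by $\plan_1^{\{i\}} = \plan$, so any completion of $\plan$ by the other agents is admissible as a $\plan_2$. Once these are separated, the universal quantification built into AAR collides immediately with the single avoiding witness $(s^{*}, \plan^{*})$, and no care for plan lengths is needed since $\plan$ simply inherits the length of $\plan^{*}$.
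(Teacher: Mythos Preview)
Your proof is correct and follows essentially the same approach as the paper's own argument: a case split on whether $\omega$ can be avoided from some $s \in \epequiv_i$ by some joint plan, taking $\plan = (\plan^{*})^{\{i\}}$ in the positive case and observing that the non-triviality clause of CAR (hence AAR) fails at every $s_1 \in \epequiv_i$ in the negative case. Your write-up is more explicit than the paper's sketch, but the structure and the key idea are identical.
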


\begin{proof}
(sketch) Either there is some compatible plan where $\omega$ does not occur (meaning $i$ does not anticipate AAR) or $\omega$ occurs in every outcome of every plan, so $i$ is not responsible. 
\end{proof}

Given some value or goal $\phi$, we want agents to avoid responsibility for $\neg \phi$, but also to seek responsibility for $\phi$ (preferably agentice active responsibility, as this guarantees the occurrence of $\phi$). However, we can show that anticipating agentive active responsibility for $\phi$ is effectively equivalent to not anticipating causal passive responsibility for $\neg \phi$ (the dual notion of anticipating CPR). 

\begin{theorem}
Let $\pldomain = (\effprecond, s_0, (\epequiv_i)_{i\in \agentset})$ be a PPD, $i \in \agentset$, and $\omega$ an $\ltllogic$-formula. If there is some plan $\plan$ for $i$ such that $i$ anticipates AAR for $\omega$ in $\plan$, then for any plan $\plan'$ for $i$, $i$ does not anticipate CPR for $\neg \omega$ in $\plan'$ if and only if $i$ anticiptes AAR for $\omega$ in $\plan'$.
\end{theorem}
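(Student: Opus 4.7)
The plan is to break the biconditional into its two directions and to work with the alternative (non-modular) formulation of anticipated AAR given just before the theorem, namely: (A1) for every $s_1 \in \epequiv_i$ and every joint plan $\plan_1$ compatible with the plan in question, $\history^{\plan_1,s_1,\gamma}\models \omega$; and (A2) some $s_2 \in \epequiv_i$ and some joint plan $\plan_2$ satisfy $\history^{\plan_2,s_2,\gamma}\models \neg \omega$. I will also adopt the convention that, without loss of generality, all plans under comparison share a common horizon (padding shorter individual plans with $\skipact$ when needed), so that gluing an individual plan to a sub-plan $\plan_1^{-\{i\}}$ is always well defined.

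For the forward direction --- anticipated AAR for $\omega$ in $\plan'$ implies that $i$ does not anticipate CPR for $\neg\omega$ in $\plan'$ --- I would argue directly. Clause (A1) applied to $\plan'$ forces $\history^{\plan_1,s_1,\gamma}\models \omega$ for every $s_1 \in \epequiv_i$ and every $\plan_1$ compatible with $\plan'$, so no such history can satisfy $\neg\omega$. Since anticipated CPR for $\neg\omega$ in $\plan'$ requires exactly such a completion as its first conjunct, it cannot hold.

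For the backward direction I assume that $i$ does not anticipate CPR for $\neg\omega$ in $\plan'$ and derive both clauses of anticipated AAR for $\omega$ in $\plan'$. Clause (A2) is immediate from the standing hypothesis: the plan $\plan$ that anticipates AAR for $\omega$ already provides some $s_2 \in \epequiv_i$ and $\plan_2$ with $\history^{\plan_2,s_2,\gamma}\models \neg\omega$. For clause (A1), I argue by contradiction: suppose there exist $s_1 \in \epequiv_i$ and $\plan_1$ compatible with $\plan'$ with $\history^{\plan_1,s_1,\gamma}\models \neg\omega$. Define the joint plan $\plan_2$ by $\plan_2^{\{i\}} = \plan$ and $\plan_2^{-\{i\}} = \plan_1^{-\{i\}}$. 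Then $\plan_2$ is compatible with $\plan_1^{-\{i\}}$ by construction, and applying clause (A1) of anticipated AAR for $\plan$ to the state $s_1$ and completion $\plan_2$ gives $\history^{\plan_2,s_1,\gamma}\models \omega$. So $\plan_1$ together with the alternative $\plan_2$ witnesses that $i$ bears CPR for $\neg\omega$ in $(\plan_1,s_1)$, hence $i$ anticipates CPR for $\neg\omega$ in $\plan'$, contradicting the assumption.

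The main obstacle is the backward direction, where the standing hypothesis pulls its weight twice: it supplies the counterfactual needed for (A2), and more crucially it hands $i$ a single individual strategy that can be swapped into any completion of $\plan'$ to force $\omega$ against every action profile of the other agents from every epistemically possible initial state --- precisely what is needed to manufacture the CPR witness refuting any violation of (A1). Without this assumption the equivalence genuinely fails: for instance, if $\omega$ is unsatisfiable then not anticipating CPR is vacuous while AAR cannot satisfy (A2). The only minor technicality is length-matching when gluing $\plan$ onto $\plan_1^{-\{i\}}$, which the padding convention from the first paragraph handles.
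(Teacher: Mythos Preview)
Your proof is correct. Both directions are sound, and your explicit ``swap-in'' construction for clause~(A1) in the backward direction is exactly the right idea: the standing hypothesis hands you an individual plan $\plan$ that forces $\omega$ against any completion at any epistemically possible state, so replacing $\plan'$ by $\plan$ in any offending completion manufactures the CPR witness.

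Your route differs from the paper's. The paper introduces an auxiliary notion of \emph{powerlessness} (agent $i$ is powerless w.r.t.\ $\omega$ in a joint plan if varying $i$'s actions never changes the truth of $\omega$) and then partitions the space of completions into powerless and non-powerless ones: the standing hypothesis forces $\omega$ in all powerless completions, while ``not anticipating CPR'' forces $\omega$ in all non-powerless completions, and the two together give (A1). Your argument bypasses this decomposition entirely and works directly with the unfolded definition of anticipated AAR, constructing the CPR witness explicitly. The paper's approach is a bit more conceptual and reusable (powerlessness is a natural invariant), but your proof is more self-contained and arguably more transparent, since it makes the role of the hypothesis plan $\plan$ completely explicit at the one point where it is needed. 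Your remark about padding with $\skipact$ to align horizons is also a detail the paper glosses over.
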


\begin{proof}
(sketch) Given a joint plan $\plan$ in a planning domain $\pldomain$, $i$ is ``powerless'' with respect to $\omega$ if no alternative plan for $i$ changes the truth value of $\omega$ in $\history^{\plan,s_0,\gamma}$. If $\omega$ occurs in all plans where $i$ is powerless then for all plans $\plan'$ for $i$, $i$ does not anticipate CPR for $\neg \omega$ in $\plan'$ if and only if $i$ anticiptes AAR for $\omega$ in $\plan'$. Otherwise, there is no plan $\plan'$ where $i$ anticiptes AAR for $\omega$ in $\plan'$.
\end{proof}

By ``effectively equivalent'' we mean that if there exists some plan $\plan$ for $i$ that anticipates AAR for $\phi$, then the plans that anticipate AAR for $\phi$ are exactly the plans that do not anticipate CPR for $\neg \phi$. However, the notions are not logically equivalent because it is possible that there are some plans for $i$ that do not anticipate CPR for $\neg \phi$ while there are none that anticipate AAR for $\phi$.

This also suggests that anticipated CPR is the the most important notion of anticipated responsibility, as it is either equivalent or effectively equivalent to every other notion of anticipated responsibility. Finally, we can show that avoiding CPR for $\neg \phi$ is a potentially powerful method for allowing a group of agents to coordinate on a certain goal, even if those agents cannot communicate.

\begin{theorem}
Let $\pldomain = (\effprecond, s_0, (\epequiv_i)_{i\in \agentset})$ be a PPD and $\omega$ an $\ltllogic$-formula. Let $\plan$ be a joint plan such that for every agent $i \in \agentset$, $i$ does not anticipate CPR for $\neg \omega$ in $\plan^{\{i\}}$. Then either $\history' \models \neg \omega$ for every history compatible with $\pldomain$, or $\history^{\plan,s_0,\effprecond} \models \omega$.
\end{theorem}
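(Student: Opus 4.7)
The plan is to prove the contrapositive: assuming there exists some history compatible with $\pldomain$ in which $\omega$ holds while $\history^{\plan,s_0,\effprecond} \models \neg \omega$, I will exhibit an agent $i$ who anticipates CPR for $\neg \omega$ in $\plan^{\{i\}}$, contradicting the hypothesis. Concretely, let $\plan^{*}$ be a joint plan of the same length $k$ as $\plan$ with $\history^{\plan^{*},s_0,\effprecond} \models \omega$.

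First I would enumerate the agents $\agentset = \{1,\ldots,n\}$ and interpolate between $\plan$ and $\plan^{*}$ by a sequence of joint $k$-plans $\plan_0,\plan_1,\ldots,\plan_n$, where $\plan_0 = \plan$, $\plan_n = \plan^{*}$, and $\plan_j$ is obtained from $\plan_{j-1}$ by replacing agent $j$'s sub-plan with agent $j$'s sub-plan from $\plan^{*}$. Since the satisfaction of $\omega$ flips between $\history^{\plan_0, s_0, \effprecond}$ and $\history^{\plan_n, s_0, \effprecond}$, a discrete intermediate-value argument yields an index $k^{*}$ with $\history^{\plan_{k^{*}-1},s_0,\effprecond} \models \neg\omega$ and $\history^{\plan_{k^{*}},s_0,\effprecond} \models \omega$.

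Next, I would set $i = k^{*}$ and verify the two clauses of Definition \ref{def:p-blame} for $i$ bearing CPR for $\neg \omega$ in $(\plan_{k^{*}-1}, s_0, \effprecond)$. The first clause holds by construction; the second is witnessed by $\plan_{k^{*}}$, since by construction all agents other than $i$ have identical sub-plans in $\plan_{k^{*}-1}$ and $\plan_{k^{*}}$, so $\plan_{k^{*}}$ is compatible with $\plan_{k^{*}-1}^{-\{i\}}$, and $\history^{\plan_{k^{*}},s_0,\effprecond} \not\models \neg\omega$. Since the paper's standing assumption gives $s_0 \in \epequiv_i$, and since agent $i$'s sub-plan has not yet been swapped at step $k^{*}-1$, we have $\plan_{k^{*}-1}^{\{i\}} = \plan^{\{i\}}$. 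Thus $\plan_{k^{*}-1}$ is a joint plan compatible with $\plan^{\{i\}}$ and starting from a state in $\epequiv_i$ that witnesses the anticipation of CPR for $\neg\omega$ by $i$ in $\plan^{\{i\}}$, yielding the desired contradiction.

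The main obstacle I foresee is the technicality that the witnessing joint plan $\plan^{*}$ for the achievability of $\omega$ may \emph{a priori} have a different length from $\plan$, while compatibility between sub-plans is defined only for equal-length plans. I would resolve this by padding $\plan^{*}$ with the $\skipact$ action (or truncating) so that it has length $k$, or equivalently by reading ``history compatible with $\pldomain$'' in the theorem as ranging over $k$-histories. Once this detail is settled, the interpolation argument is clean because each step of the interpolation changes exactly one agent's sub-plan, which is precisely the modification pattern that the CPR counterfactual clause inspects.
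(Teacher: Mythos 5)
Your proof is correct, but it takes a genuinely different route from the paper's. The paper argues by contradiction through two previously established results: if $\history^{\plan,s_0,\effprecond} \models \neg\omega$ and $\neg\omega$ is not inevitable, then by the completeness of contributive responsibility (Theorem~\ref{thm:completeness}, itself proved by a chain-of-coalitions argument $\agentset \supseteq \agentset\setminus\{i_1\} \supseteq \cdots \supseteq \emptyset$ and locating the point where ``the coalition's actions guarantee $\neg\omega$'' flips) some agent bears CCR for $\neg\omega$ in $\plan$, and then by the implication diagram (Theorem~\ref{thm:figure}) that agent anticipates CCR and hence CPR for $\neg\omega$ in $\plan^{\{i\}}$. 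You instead give a self-contained argument: interpolate between $\plan$ and a witness plan $\plan^{*}$ by swapping one agent's sub-plan at a time, and read off CPR directly at the flip index, which yields the anticipation witness because agent $k^{*}$'s own sub-plan is still untouched in $\plan_{k^{*}-1}$. Your chain lives in the space of concrete joint plans (evaluating $\omega$ on single histories), whereas the paper's chain lives in the space of coalitions (evaluating a universally quantified ``guarantee'' property), so yours is more elementary and avoids CCR entirely, while the paper's is modular and reuses machinery it needs anyway. Both proofs share the same technical soft spot, namely that compatibility and plan union are only defined for equal-length plans while ``history compatible with $\pldomain$'' is not length-restricted; you are right to flag this, and your proposed reading (restricting to $k$-histories) is the one that makes the statement true --- note that padding $\plan^{*}$ with $\skipact$ does not in general preserve satisfaction of an arbitrary $\ltllogic$ formula over finite traces, so the length-restriction reading is the safer of your two fixes. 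Your appeal to $s_0 \in \epequiv_i$ is also legitimate, as the paper assumes agents always consider the true initial state possible.
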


\begin{proof}
(sketch) Suppose for contradiction that $\omega$ occurs in some plan $\plan'$ and does not occur in $\plan$. Then by theorem \ref{thm:completeness} there is some agent $i$ who bears CCR for $\neg \omega$ in $\plan$. Then by theorem \ref{thm:figure} it must be the case that $i$ anticipates CPR for $\neg \omega$ in $\plan^{\{i\}}$, which is a contradiction.
\end{proof}
This shows that even when agents with a shared goal cannot communicate and when no agent can individually guarantee the success of the goal, the application of anticipated responsibility can allow the agents to successfully coordinate their actions and achieve the goal.

\section{Computing and Implementing Responsibility}\label{sec:theorems}

\subsection{PDDL Implementation}

As previously mentioned, our model is designed to be practically useful in real-world planning problems. Therefore we outline how our model can be implemented in the multi-agent extension of PDDL 3.1 proposed by Kovacs \cite{kovacs2012}.

PDDL solvers take two inputs: a domain and a problem. The domain gives the object types, actions and predicates, whereas the problem gives the objects, initial state and goal. Below is some simplified PDDL code for a multi-agent planning domain involving a number of immobile agents and some tables that is inspired by the example of Kovacs \cite{kovacs2012}. The agents can lift tables that they are next to, or do nothing ($\skipact$). Our example involves two tables (table1 and table2) and two agents (A1 and A2).

\begin{lstlisting}[caption=Example PDDL Domain]
(define (domain responsibility-attribution)

(:requirements :equality :negative preconditions :typing :multi-agent)
(:types agent table)
(:predicates (lifted ?o - object) (at ?a - agent ?o object))
(:action lift :agent ?a - agent :parameters (?o - object)
:precondition (and(not(lifted ?o))
                  (at ?a ?o))
:effect (lifted ?o))

(:action skip :agent ?a - agent :parameters ()
:precondition ()
:effect ()))
\end{lstlisting}

Consider the history where each agent starts next to a separate table, A1 performs the action $\skipact$ and A2 performs $\actlift$. The following code illustrates how we can use PDDL to check if A1 bears CPR for $\omega = \neg \eventually \henceforth($lifted table1 $\land$ lifted table2$)$.

Running the first problem checks if $\omega$ actually occurs, the second problem fixes the actions of all agents besides A1 and checks if A1 could have acted differently and avoided $\omega$. If a plan is found, then A1 bears CPR for $\omega$ (we present just the goal as the rest is the same as the first problem).

\begin{lstlisting}[caption= Checking that the outcome occurs.]
(define (problem causal-passive-responsibility-1)
(:domain responsibility-attribution)
(:objects A1 A2 - agent table1 table2 - table)
(:init (at A1 table1) (at A2 table2))
(:goal (and (lifted table1) (lifted table2)
            (do(A1 skip 1))(do(A2 lift 1)))))
\end{lstlisting}

\begin{lstlisting}[caption= Checking for Causal Passive Responsibility]
(:goal (and (lifted table1) (lifted table2) (do(A2 lift 1)))))
\end{lstlisting}

To describe the plans of agents in PDDL goals we use $\does{i}{a,t}$ which is true whenever agent $i$ does action $a$ at time $t$.\footnote{For simplicity, we do not define $\does{i}{a,t}$ in the code as its definition is quite complex and uninteresting.}

In terms of the outcomes that we can attribute or anticipate responsibility for, PDDL 3 supports any boolean combination of predicates as goals, and also features temporal operators that function as state constraints for writing $\ltllogic$ outcomes \cite{Gerevini05}. However, since PDDL does not support nesting of temporal operators, we do not have the full expressiveness of $\ltllogic$. That said, the expressiveness of PDDL should be sufficient for the vast majority of outcomes that one realistically might be interested in.

The following problems demonstrate how to check CAR for A1 and $\omega$. Firstly, we have to check if $\omega$ is inevitable by attempting to find a joint plan that acheives $\neg \omega$. Then we have to check if the actions of A1 are sufficient to guarantee $\omega$.

\begin{lstlisting}[caption= Checking if $\omega$ is inevitable.]
(:goal (and (lifted table1) (lifted table2))))
\end{lstlisting}

\begin{lstlisting}[caption= CAR Attribution part 2]
(:goal (and (lifted table1) (lifted table2) (do(a skip 1)))))
\end{lstlisting}

For checking AAR we first have to follow the procedure for checking CAR, but then we also have to check that the actions of A1 are sufficient to guarantee $\omega$ in every epistemically possible world for A1. In this example we will suppose that $(\epequiv_{A1}) = \{\{$at A1 table1, at A2 table2$\},\{$at A1 table1, at A2 table1$\}\}$ modelling that A1 does not know where A2 is.

\begin{lstlisting}[caption= AAR Attribution]
(define (problem causal-active-responsibility-2)
(:domain responsibility-attribution)
(:objects a b - agent table1 table2 - table)
(:init (at a table1) (at b table1))
(:goal (and (lifted table1) (lifted table2) (dotime(a skip 1)))))
\end{lstlisting}

For anticipating CAR or AAR the process is much the same as attribution, since attribution depends only on the actions of A1, meaning the actions of all other agents do not need to be defined. We simply have to repeat the procedure for CAR or AAR attribution once for each epistemically possible start state. If A1 bears CAR/AAR in any start state, then they anticipate CAR/AAR. The process for anticipating CPR is more complex. This is because we need to find start state and a plan for all agents besides A1 such that the intented plan for A1 leads to $\omega$ but there exists some other plan for A1 that leads to $\neg \omega$. This can be solved in a single planning problem (at least, one problem per possible start state) by creating a duplicate copy of each object, allowing us to effectively run two copies of the planning domain in parrallel, with the goal enforcing that the actions of all agents besides A1 must be the same in both copies. If a plan is found for any possible start state, then A1 anticipates CPR for $\omega$.

\begin{lstlisting}[caption= CPR Anticipation]
(define (problem causal-active-responsibility-2)
(:domain responsibility-attribution)
(:objects a b a-1 b-1 - agent table1 table2 table1-1 table2-1- table)
(:init (at a table1) (at b table2) (at a-1 table1-1) (at b-1 table2-1))
(:goal (and (lifted table1) (lifted table2) 
            (do(a skip 1))
            (not (and (lifted table1-1) 
                      (lifted table2-1)))
            (henceforth (and (do(b skip) ->  
                                do(b-1 skip))
                             (do(b lift)->
                                do(b-1 lift)))))))
\end{lstlisting}

The procedure for attributing CCR is more complex, as which agents actions we have to fix varies depending on which coalition we are testing, and there are exponentially many coalitions to check. Fortunately, since anticipated CCR is equivalent to anticipated CPR, the procedure for checking that is relatively straightforward.

\subsection{Complexity Results}

In this section we will demonstrate the computational complexity of determining various kinds of blameworthiness. Full proofs of our results can be found in the supplementary material. We define X-ATTRIBUTION as the problem of determining if $i$ bears X$\in \{$CAR,CPR,CCR,AAR$\}$ for $\omega$ in $\plan$ and X-ANTICIPATION as the problem of determining if $i$ \textit{anticipates} X for $\omega$ in $\plan$.

\begin{theorem}\label{thm:attributioncomplexity}
CAR-ATTRIBUTION is a member of P$^{\mathit{NP}[2]}$, CPR-ATTRIBUTION is NP-Complete, CCR-ATTRIBUTION is a member of $\Sigma_2^{P}$ and AAR-ATTRIBUTION is a member of $\Delta_2^{P}$. 
\end{theorem}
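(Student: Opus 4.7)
The plan is to analyze each problem by reading off the quantifier alternation over plans in the corresponding definition, after establishing two polynomial-time primitives. First, given a joint $k$-plan $\plan$ and a state $s_0$, the induced history $\history^{\plan,s_0,\effprecond}$ can be constructed in time polynomial in $k + |\effprecond| + |\propset| + |\agentset|$ by iterating the effect-precondition functions step by step and applying the inertia rule. Second, for a finite history $\history$ and an $\ltllogic$-formula $\omega$, checking $\history\models\omega$ is polynomial in $|\history|\cdot|\omega|$ via bottom-up labelling. Since every alternative plan in the definitions is either compatible with a restriction of $\plan_1$ (hence of length $|\plan_1|$) or can be assumed polynomial-size under the standard bounded-plan convention, each existential or universal block over plans lies in NP or coNP.

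For CPR-ATTRIBUTION, membership in NP is immediate: guess a plan for $i$ of length $|\plan_1|$, combine it with $\plan_1^{-\{i\}}$ to form $\plan_2$, and verify in polynomial time both $\history^{\plan_1,s_0,\effprecond}\models\omega$ and $\history^{\plan_2,s_0,\effprecond}\not\models\omega$. For NP-hardness I would reduce from 3-SAT: given $\phi(x_1,\ldots,x_n)$, build a single-agent domain with one proposition per variable and actions $a_j^+, a_j^-$ that set $x_j$ true or false in one step (i.e., $\effprecondplus(i,a_j^+,x_j) = \top$ and $\effprecondminus(i,a_j^-,x_j) = \top$, all other effects $\bot$), and let $\omega$ be the $\ltllogic$-formula $\nexttime^n \neg\phi$. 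Pick $\plan_1$ to be any length-$n$ plan whose induced assignment falsifies $\phi$ --- polynomial-time constructible from any clause, with the tautology case trivially YES. Every compatible $\plan_2$ has length $n$ and induces a full assignment at step $n$, so $\history^{\plan_2}\not\models\omega$ iff that assignment satisfies $\phi$, and hence $i$ bears CPR iff $\phi$ is satisfiable.

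For CAR-ATTRIBUTION, Definition~\ref{def:a-blame} splits into a coNP statement ($\forall \plan_2$ compatible with $\plan_1^{\{i\}}: \history^{\plan_2}\models\omega$) and an NP statement ($\exists \plan_3 \in \jointplanset{}{\agentset}: \history^{\plan_3}\not\models\omega$); a polynomial-time machine issuing two NP oracle queries and conjoining the answers places CAR in P$^{NP[2]}$. For CCR-ATTRIBUTION, rewrite Definition~\ref{def:c-blame} as an $\exists\forall$ formula: existentially guess the coalition $J \ni i$ together with a plan $\plan_3$ compatible with $\plan_1^{J\setminus\{i\}}$ such that $\history^{\plan_3}\not\models\omega$, then issue a single coNP oracle call verifying $\forall \plan_2$ compatible with $\plan_1^J: \history^{\plan_2}\models\omega$, yielding membership in $\Sigma_2^P$. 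AAR-ATTRIBUTION augments CAR with the extra coNP condition ``$\forall \plan_2$ compatible with $\plan_1^{\{i\}}, \forall s_1\in\epequiv_i: \history^{\plan_2,s_1,\effprecond}\models\omega$'', adding a third NP oracle query and placing AAR in $P^{NP[3]}\subseteq\Delta_2^P$.

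The main obstacle is collapsing the literal $\exists\forall\exists$ reading of CCR into $\exists\forall$: without recognising that the innermost witness $\plan_3$ depends only on $\plan_1$ and the guessed coalition $J$, never on the universally quantified $\plan_2$, one obtains only $\Sigma_3^P$. Moving $\plan_3$ into the outer existential phase is what buys the tight $\Sigma_2^P$ bound. The remaining routine technicality is the CPR hardness encoding of variable-setting actions in the compact action-description language, handled above with constant-size effect-precondition entries.
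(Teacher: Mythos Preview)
Your proposal is correct and follows essentially the same strategy as the paper: establish the polynomial primitives (history generation and $\ltllogic$ model checking on finite traces), then read off the quantifier structure of each definition and match it to the appropriate oracle class. The membership arguments for CAR, CCR and AAR coincide with the paper's almost verbatim.

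Two minor points of divergence are worth noting. For CPR hardness, the paper also reduces from SAT in a single-agent domain but uses one ``set $p$ true'' action per proposition together with a distinguished $\actfail$ action and an extra proposition $p_0$, encoding the outcome as $\omega = \phi \land \henceforth \neg p_0$; your encoding with paired $a_j^+/a_j^-$ actions and $\omega = \nexttime^n \neg\phi$ is a different but equally valid (and arguably cleaner) gadget, since any assignment is reachable in $n$ steps and a falsifying assignment for $\plan_1$ is constructible from any clause. For AAR, you bundle the epistemic universal quantifier into a single coNP oracle call, yielding P$^{NP[3]}$, whereas the paper iterates one oracle call per state in $\epequiv_i$; both sit inside $\Delta_2^P$, and yours is tighter. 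Your explicit remark that the $\exists\plan_3$ witness in CCR is conjoined with, rather than nested under, the $\forall\plan_2$ block is exactly the observation the paper relies on (implicitly) when it guesses $J$ first and then issues two independent oracle calls. Finally, both you and the paper handle the unbounded $\plan_3$ in CAR by restricting to plans of length $|\plan_1|$; you flag this as the ``bounded-plan convention'', which the paper adopts silently via its LTLf-MA-PLANMIN-POLY lemma.
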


\begin{theorem}\label{thm:anticipationcomplexity}
CAR-ANTICIPATION is a member of $\Delta^2_p$, CPR-ANTICIPATION is NP-Complete, CCR-ANTICIPATION is NP-Complete, and AAR-ANTICIPATION is a member of $\Delta_2^{P}$,.
\end{theorem}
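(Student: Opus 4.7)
The plan is to handle each of the four claims separately. The upper bounds come from analyzing the quantifier structure of each notion of anticipation; the NP-hardness claims for CPR and CCR are obtained by adapting the reductions used for the attribution bounds in Theorem \ref{thm:attributioncomplexity} (essentially encoding 3-SAT into a bounded-horizon planning instance, exploiting that evaluating a fixed $\ltllogic$ formula on a fixed finite history is polynomial by the standard dynamic-programming procedure and that histories are generated from plans and initial states in polynomial time).

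For CPR-ANTICIPATION, membership in NP is immediate: guess a triple $(s_1, \plan_1, \plan_2)$ with $s_1 \in \epequiv_i$, $\plan_1$ extending $\plan$, and $\plan_2$ agreeing with $\plan_1$ on $\agentset \setminus \{i\}$, then verify $\history^{\plan_1, s_1, \effprecond} \models \omega$ and $\history^{\plan_2, s_1, \effprecond} \not\models \omega$. For CCR-ANTICIPATION the naive unfolding contains a universal quantifier over extensions of $\plan_1^J$, which only yields $\Sigma_2^P$, so the key step is to prove that anticipated CCR coincides with anticipated CPR. One direction, anticipated CPR $\Rightarrow$ anticipated CCR, is immediate by taking $J = \agentset$, which collapses the universal quantifier to a single plan. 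For the converse I will use plan splicing: given witnesses $(s_1, \plan_1, J, \plan_3)$ for anticipated CCR, define $\plan_1'$ by keeping $\plan_1$'s actions on $J$ and copying $\plan_3$'s actions on $\agentset \setminus J$. Then $\plan_1'$ still extends $\plan_1^J$, so $\omega$ holds along $\history^{\plan_1', s_1, \effprecond}$ by the universal part of the CCR witness, while $\plan_3$ now extends $\plan_1'^{-\{i\}}$ and falsifies $\omega$, delivering a CPR witness with the same $s_1$.

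For CAR-ANTICIPATION and AAR-ANTICIPATION, the $\Delta_2^P$ upper bounds come from iterating explicitly over $s_1 \in \epequiv_i$ (assumed given explicitly, hence of polynomial size in the input) with a constant number of NP-oracle calls per state. For CAR, each $s_1$ requires two oracle calls: the universal check ``for all $\plan_2 \supseteq \plan$, $\history^{\plan_2, s_1, \effprecond} \models \omega$'' (co-NP) and the existential check ``exists $\plan_3$ with $\history^{\plan_3, s_1, \effprecond} \not\models \omega$'' (NP). For AAR, I will first rewrite the unfolded definition to observe that the universal condition ``for all $s \in \epequiv_i$ and all $\plan_2 \supseteq \plan$, $\history^{\plan_2, s, \effprecond} \models \omega$'' does not depend on the guessed $s_1$, and so can be answered with a single co-NP oracle call; the remaining existential requirement collapses to the single NP query ``exists $s_1 \in \epequiv_i$ and $\plan_3$ with $\history^{\plan_3, s_1, \effprecond} \not\models \omega$''. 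Both procedures thus live in $P^{NP}$.

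The principal obstacle is the CCR-ANTICIPATION collapse to NP, since the definition of CCR itself has an alternation of quantifiers. The splicing argument must be written out carefully because $J$ may be strictly smaller than $\agentset$, so the actions on $\agentset \setminus J$ are genuinely free parameters that must simultaneously support $\omega$ under $\plan_1'$ and refute it under $\plan_3$; the construction above resolves this by using $\plan_3$'s values everywhere outside $J$, and noting that this choice does not interfere with the universal guarantee ``$\omega$ holds along every extension of $\plan_1^J$'' that was inherited from the CCR witness.
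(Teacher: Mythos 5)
Your proposal is correct and follows essentially the same route as the paper: guess-and-check plus the single-agent attribution reduction for CPR, a collapse of anticipated CCR to anticipated CPR for the NP-completeness of CCR, and iteration over the (explicitly given) epistemic equivalence set with NP-oracle calls for the two $\Delta_2^P$ bounds. Your plan-splicing argument for anticipated CCR $\Rightarrow$ anticipated CPR is the same construction the paper uses in its proof of Theorem~\ref{thm:figure} (which the complexity proof simply cites), and your observation that the universal clause of AAR-anticipation is independent of the guessed state is a minor tightening that still lands in the same class.
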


These results are only intended to give an introduction to the complexity analysis of this setting. One class of problems that deserve further study is the task of identifying if a plan exists that does/does not anticipate responsibility for some outcome $\omega$ (decision problem) and finding such a plan if one exists (search problem). The problem of identifying if a CPR-anticipating plan exists should be NP-complete given theorem \ref{thm:anticipationcomplexity}, as NP allows us to simply guess a plan, and then check for anticipated responsibility. This puts us in line with the computational complexity of single-agent planning with propositional goals, which is also NP-complete \cite{Turner02}.
\section{Conclusions and Future Work} \label{sec:futurework}

In this paper we have presented our model for responsibility attribution and anticipation in a multi-agent planning setting with partial information regarding the initial state. We have presented both causal and agentive versions of active, passive and contributive responsibility. We have demonstrated how our notions of anticipated responsibility could be useful for plan selection in a multi-agent setting, and have given a complexity analysis of our model. Finally, we have outlined a PDDL implementation of our model.

For future work, a full PDDL implementation would be allow us to test how useful our concepts of responsibility are when applied to real-world planning problems. Furthermore, we could expand our notions of responsibility to handle additional factors. The most obvious extensions would be For example, we could include beliefs about the likely actions of other agents in line with Lorini \cite{LoriniLM14}, or could consider intentions, probabilities and/or degress of responsibility in line with Halpern and Kleiman-Weiner \cite{HalpernK18}. Finally, since agents may have multiple goals or values that they may be held responsible for satisfying or violating, it would be useful to extend our model to allow plan comparison based on anticipated responsibility for multiple different outcomes.


\bibliography{ecai}
\newpage
\appendix
\section{Supplementary Material}

This section contains various proofs and definitions from the paper.

\subsection{Introduction}

No additional material.

\subsection{Related Work}

No additional material.

\subsection{Model}\label{model}

\begin{definition}[Action-compatible histories]
Let
$\effprecond=(\effprecondplus,\effprecondminus)$
be an action theory and let $\history= (\historyst, \historyact)$ be a $k$-history.
We say $\history$
is compatible with $\effprecond$
if the following condition holds
for every $t \in \{0,\ldots,k-1\}$:
\begin{align*}
\historyst(t+1) =& \Big( \historyst (t)  \setminus \big\{p  \in \propset \suchthat \big(\exists i\in \agentset, \exists a \in \actset \text{ such that }\\
&\historyact(i,t)=a
\text{ and } \history, t \models \effprecondminus(i,a,p) \big)  \text{ and }\\ &\big(\forall j\in \agentset, \forall b \in \actset \text{ if } \historyact(j,t)=b \text{ then }\\
& \history, t \models \neg\effprecondplus(j,b,p)\big) \big \}\Big)\\
&\cup \big\{p  \in \propset \suchthat \big(\exists i\in \agentset, \exists a \in \actset 
\text{ such that }\\
&\history, t \models \effprecondplus( i,a,p ) \big) \text{ and } \big(\forall j \in \agentset, \forall b \in \actset \text{ if }\\
&\historyact(j,t)=b \text{ then } \history, t \models \neg\effprecondminus(j,b,p)\big) \big\}.
\end{align*}
\end{definition}

In words, a
history 
$\history$
is a $\effprecond$-compatible history for action theory $\effprecond=(\effprecondplus,\effprecondminus)$ if each state respects the actions performed in the previous state. Propositions become false if the negative effect precondition (for that proposition) of an executed action holds, while the positive effect preconditions of all executed actions do not hold.
Similarly, a proposition becomes true if the positive effect precondition of an executed action holds, while the negative effect preconditions
of all executed actions do not hold. 
In case of conflicts between actions, we use an inertial principle: if one or more actions attempt to enforce different truth values for $p$, then the truth value of $p$ does not change.

\begin{proposition}
Given a state transition function $\tau$, there exists an action theory $\effprecond$ that is equivalent to $\tau$ and at worst polynomially larger.
\end{proposition}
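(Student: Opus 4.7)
The plan is to compile the entire transition table of $\tau$ into the effect preconditions of a single designated agent, say agent~$1$, while making every other agent an ``effectless'' spectator. Concretely, for every $j \neq 1$ and every $(a,p) \in \actset \times \propset$, I would set $\effprecondplus(j,a,p) = \effprecondminus(j,a,p) = \bot$. For agent~$1$, let $\vec{b} = (b_j)_{j \neq 1}$ range over joint actions of the other agents, and let $\phi_{s,\vec{b}}$ denote the $\langlogic_{\proplogic+}$ formula that conjoins a complete literal description of state $s$ (namely $p$ for each $p \in s$ and $\neg p$ for each $p \notin s$) together with $\does{j}{b_j}$ for each $j \neq 1$. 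I would then define $\effprecondplus(1,a,p)$ as the disjunction of all $\phi_{s,\vec{b}}$ such that $p \notin s$ but $p \in \tau(s,(a,\vec{b}))$, and symmetrically $\effprecondminus(1,a,p)$ as the disjunction over pairs with $p \in s$ but $p \notin \tau(s,(a,\vec{b}))$. Intuitively, these DNFs fire on precisely those transitions where $\tau$ flips $p$'s truth value, so the inertial principle automatically preserves unchanged propositions.

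Next I would verify equivalence by induction on the time index $t$. Fix an initial state $s_0$ and any joint plan $\plan$; let $\history$ be the $\effprecond$-compatible history generated by $\plan$ from $s_0$, and let $\history^{\tau}$ be the history obtained by iterating $\tau$. Given $\historyst(t) = \historyst^{\tau}(t)$ and the same joint action $(a,\vec{b})$ at time $t$ in both histories, I would check case by case that for each $p$ exactly one of three things occurs: (i) $\effprecondplus(1,a,p)$ holds and $\effprecondminus(1,a,p)$ does not, flipping $p$ to true, precisely when $\tau$ demands the flip from false to true; (ii) the symmetric case for true to false; (iii) neither holds, so the inertial rule leaves $p$ unchanged, precisely when $\tau$ leaves $p$ unchanged. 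Because the other agents contribute only $\bot$, no inter-agent conflicts can arise and no spurious preservation occurs, so $\historyst(t{+}1) = \tau(\historyst(t),(a,\vec{b})) = \historyst^{\tau}(t{+}1)$.

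For the size bound, each formula $\phi_{s,\vec{b}}$ has size $O(|\propset| + |\agentset|)$, and each of the $2 \cdot |\actset| \cdot |\propset|$ formulas for agent~$1$ is a disjunction of at most $|\stateset| \cdot |\actset|^{|\agentset|-1}$ such conjuncts, yielding total size $O(|\stateset| \cdot |\actset|^{|\agentset|} \cdot |\propset| \cdot (|\propset|+|\agentset|))$. This is polynomial in the natural table size $O(|\stateset| \cdot |\actset|^{|\agentset|} \cdot |\propset|)$ of $\tau$, with blowup factor $O(|\propset|+|\agentset|)$. The main obstacle is purely bookkeeping rather than conceptual: one must be vigilant that the two DNFs for the same $(a,p)$ never fire simultaneously (guaranteed here, since their disjuncts partition on whether $p \in s$), and delegating all effects to a single agent sidesteps the otherwise subtle question of how multiple agents should ``jointly encode'' the same $\tau$-transition without triggering the inertial rule unintentionally.
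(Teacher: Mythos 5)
Your construction is correct and is essentially the paper's own proof: both compile the transition table of $\tau$ into disjunctions of complete state descriptions conjoined with joint-action descriptions, one disjunct per table entry, and both arrive at the same polynomial bound relative to the (already exponential) size of $\tau$. The only real differences are cosmetic --- the paper assigns the same self-guarding formula, containing $\does{j}{a_j}$ for \emph{every} agent including the indexed one, to all agent--action pairs and encodes the full target truth value of $p$ rather than only the flips, whereas you delegate everything to agent $1$ and rely on the inertial principle --- so the one thing to tighten is to conjoin $\does{1}{a}$ into your guard as well, since your formula $\effprecondplus(1,a,p)$ otherwise constrains only the other agents' actions and would misfire if the semantics ever consults it under an action of agent $1$ different from $a$.
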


\begin{proof}
\begin{proof}
Let $\tau$ be a state transition function. For any state $s \in \stateset$ let $\phi(s)$ be a conjunction of literals that describes exactly the state $s$. For any joint action $\overline{a} \in \actset^\agentset$ let $\psi(\overline{a})$ be a conjunction of $\does{i}{a}$ that exactly describes $\overline{a}$. For every proposition $p \in \propset$ let $T(p)$ be the set of $(s,\overline{a}) \in \stateset \times \actset^\agentset$ such that $p$ is true in $\tau(s,\overline{a})$. Similarly let $F(p)$ be the set of $(s,\overline{a}) \in \stateset \times \actset^\agentset$ such that $p$ is false in $\tau(s,\overline{a})$ Define $\effprecond$ as follows:

\begin{align*}
    &\effprecondplus(i,a,p) = \bigvee_{(s,\overline{a}) \in T(p)} \phi(s) \land \psi(\overline{a})\\
    &\effprecondminus(i,a,p) = \bigvee_{(s,\overline{a}) \in F(p)} \phi(s) \land \psi(\overline{a})
\end{align*}

To see that $\effprecond$ is equivalent to $\tau$, consider an arbitrary state $s$ and joint action $\overline{a}$. Let $s_1 = \tau(s,\overline{a})$ let $s_2$ be the state following from $s$ and $\overline{a}$ according to $\effprecond$. Suppose $p \in s_1$ for some $p \in \propset$. Therefore $(s,\overline{a}) \in T(p)$. Therefore, by our definition of $\effprecond$, for every $i \in \agentset$ and $a \in \actset$, $s, \overline{a} \models \effprecondplus(i,a,p)$. Furthermore, for every $i \in \agentset$ and $a \in \actset$, $s, \overline{a} \not \models \effprecondminus(i,a,p)$ since the only possible disjunct in $\effprecondminus(i,a,p)$ that can be true is $\phi(s) \land \psi(a)$ which is not in $\effprecondminus(i,a,p)$ since $p$ is not false in $s_1$. Therefore $p \in s_2$. By a similar argument we can see that if $p \notin s_1$ then $p \notin s_2$.

We will now show that $\effprecond$ is at most polynomially larger than $\tau$. Each ``entry'' in $\tau$ contains one joint action and two states, and therefore is roughly of size $(2 \times |\propset|)+|\agentset|$. Since there is an entry in $\tau$ for every possible joint action/state pair, there must be $|\actset|^{|\agentset|} \times 2^{|\propset|}$ entries. Therefore the total size is $((2 \times |\propset|)+|\agentset|) \times |\actset|^{|\agentset|} \times 2^{|\propset|}$. Which is exponential in the size of $\propset$ and $\agentset$.

In the version of $\gamma$ that we have defined, the size of each entry is at most the size of all possible joint action/state pairs. Which is of size $|\actset|^{|\agentset|} \times 2^{|\propset|}$. The number of entries is two (for $\effprecondplus$ and $\effprecondminus$) times the number of action/agent/proposition triples, meaning $2 \times |\actset| \times |\agentset| \times |\propset|$. This gives us a total size of $2 \times |\actset| \times |\agentset| \times |\propset| \times |\actset|^{|\agentset|} \times 2^{|\propset|}$. Cancelling out shared terms gives us a relative size of $(2 \times |\propset|)+|\agentset|)$ for $\tau$ and $2 \times |\actset| \times |\agentset| \times |\propset|$ for $\effprecond$.

However, the size of $\tau$ is always exponential in the size of $\propset$ and $\agentset$, since the number of entries in $\tau$ are fixed. On the other hand, entries for $\effprecond$ can in some cases be as small as constant size (as we can define a class of $\effprecond$ where every $\effprecondplusminus(i,a,p) \in \{\top, \bot\}$). This means $\effprecond$ can be as small as $2 \times |\actset| \times |\agentset| \times |\propset|$.
\end{proof}
\end{proof}

\begin{proposition}
There exists some state transition function $\tau_1$ such that any action theory $\effprecond$ that is equivalent to $\tau_1$ must contain $\does{i}{a}$.
\end{proposition}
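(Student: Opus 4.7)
The plan is to exhibit a small explicit $\tau_1$ whose one-step behaviour cannot be reproduced if each $\effprecondplus(i,c,p)$ and $\effprecondminus(i,c,p)$ is restricted to being a purely propositional formula over $\propset$. I would take $\agentset = \{1,2\}$, $\propset = \{p\}$, $\actset = \{a,b\}$, and define $\tau_1$ from the initial state $\emptyset$ so that $p$ becomes true exactly when the two agents perform the same action ($(a,a)$ or $(b,b)$), and stays false when they perform different actions; the transitions from $\{p\}$ may be fixed arbitrarily. The point of this $\tau_1$ is that the effect on $p$ is a genuine function of the \emph{pair} of executed actions, rather than of how many and which per-agent preconditions fire based on the current state alone.

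To reach the contradiction, I would assume some equivalent $\effprecond$ exists whose description uses no $\does{\cdot}{\cdot}$ atoms. Then each of $\effprecondplus(i,c,p)$ and $\effprecondminus(i,c,p)$ is a propositional formula over $\{p\}$, and its truth value at $\emptyset$ is a fixed Boolean $P^+_{i,c}$ or $P^-_{i,c}$ that does not depend on what joint action is executed. The $\effprecond$-compatibility rule from the supplementary definition then pins down the next state from $\emptyset$ purely in terms of which of these Booleans fire among the two agent-action slots that are actually executed: $p$ becomes true iff some executed $P^+_{i,c_i}$ fires and no executed $P^-_{i,c_i}$ fires.

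The case analysis is then short. The transitions $(a,a) \to \{p\}$ and $(b,b) \to \{p\}$ force $P^+_{1,a} \lor P^+_{2,a}$, $P^+_{1,b} \lor P^+_{2,b}$, and all four Booleans $P^-_{i,c}$ to be false. With every $P^-$ value ruled out, the transition $(a,b) \to \emptyset$ can only be obtained by having no positive precondition fire, i.e.\ $\neg P^+_{1,a} \land \neg P^+_{2,b}$, which combined with the disjunction from $(a,a)$ forces $P^+_{2,a}$ to be true. The symmetric analysis of $(b,a) \to \emptyset$ yields $\neg P^+_{2,a}$, contradicting the previous line. The one delicate step I want to double-check carefully is the conflict clause in the supplementary definition: I need to verify that once all $P^-$ Booleans at $\emptyset$ are false, the inertial tie-breaker is genuinely unavailable in the $(a,b)$ and $(b,a)$ transitions, so that an unblocked positive precondition really does flip $p$ to true. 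Once that is pinned down, the contradiction closes and we conclude that any $\effprecond$ equivalent to this $\tau_1$ must mention some $\does{i}{a}$ in its description.
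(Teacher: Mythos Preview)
Your proposal is correct and follows essentially the same approach as the paper: the same two-agent, two-action, one-proposition XOR-style transition function, the same reduction of $\does{}$-free preconditions to state-determined Booleans, and the same chain of deductions leading to the contradiction on $P^+_{2,a}$. Your version is in fact slightly cleaner than the paper's, since by working only from the state $\emptyset$ you obtain $\neg P^-_{i,c}$ for all four slots directly from the $(a,a)$ and $(b,b)$ transitions, whereas the paper first rules out $\top$ as a value by appealing to transitions from the state $\{p\}$; the delicate step you flag about the inertial tie-breaker is indeed handled once all $P^-$ values at $\emptyset$ are false.
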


\begin{proof}
Consider the following state transition function $\tau_1$ for $\actset = \{a_1,a_2\}$, $\agentset = \{A_1,A_2\}$ and $\propset = \{p\}$.

\begin{center}
\begin{tabular}{c|c|c|c}
    $A_1$ & $A_2$ & $s_0$ & $s_1$ \\
    \hline
    $a_1$ & $a_1$ & any & $p$\\
    $a_1$ & $a_2$ & any & $\neg p$\\
    $a_2$ & $a_1$ & any & $\neg p$\\
    $a_2$ & $a_2$ & any & $p$\\
\end{tabular}
\end{center}

Suppose for contradiction that $\effprecond$ is an action theory that is equivalent to $\tau_1$ and does not contain $\does{i}{a}$. Therefore since $\propset$ is a singleton set, we can suppose without loss of generality that $\effprecondplusminus(i,a,p) \in \{\top,\bot,p,\neg p\}$ for all $i \in \agentset$ and $a \in \actset$.

Firstly, we will show that $\effprecondplusminus(i,a,p)$ must always be in $\{\bot,p,\neg p\}$. If $\effprecondplus(i,a,p) = \top$ then agent $i$ performing action $a$ in state $p$ will always lead to state $p$ by the inertial principle, which contradicts $\tau_1$. Similarly, if $\effprecondminus(i,a,p) = \top$ then agent $i$ performing action $a$ in state $\neg p$ will always lead to state $\neg p$, which also contradicts $\tau_1$.

Since $\tau_1(\neg p, [A_1 \mapsto a_1, A_2 \mapsto a_1]) = p$, it must be the case that for some $i \in \agentset$, $\neg p \models \effprecondplus(i,a_1,p)$, meaning that $\effprecondplus(i,a_1,p) = \neg p$. Without loss of generality, we can suppose that $\effprecondplus(A_1,a_1,p) = \neg p$. We also know that $\effprecondminus(A_1,a_1,p), \effprecondminus(A_2,a_1,p) \in \{\bot, p\}$.

Therefore, since $\tau_1(\neg p, [A_1 \mapsto a_1, A_2 \mapsto a_2]) = \neg p$, it must be the case that either $\neg p \models \effprecondminus(A_1,a_1,p)$ or $\neg p \models \effprecondminus(A_2,a_2,p)$, but we already know that $\effprecondminus(A_1,a_1,p) \in \{\bot, p\}$, so it must be that $\effprecondminus(A_2,a_2,p) = \neg p$. However, this means that the joint action $[A_1 \mapsto a_2, A_2 \mapsto a_2]$ in the state $\neg p$ must result in $\neg p$, which contradicts $\tau_1$.
\end{proof}

\subsection{Formalising Responsibility}\label{sec:anticipating}

\begin{theorem}\label{thm:completeness}
Let $\pldomain=(\effprecond, s_0,(\epequiv_i)_{i\in \agentset})$ be a PPD, let $\plan$ be a joint plan and let $\history = \history^{\plan,s_0,\effprecond}$. Let $\omega \in \langlogic_{\ltllogic}$ such that $\history \models \omega$. Then either $\history' \models \omega$ for every history compatible with $\pldomain$, or there exists some $i \in \agentset$ such that $i$ bears CCR for $\omega$ in $\plan$.
\end{theorem}

\begin{proof}
Let $\pldomain=(\effprecond, s_0,(\epequiv_i)_{i\in \agentset})$ be a PPD, let $\plan$ be a joint plan and let $\history = \history^{\plan,s_0,\effprecond}$. Let $\omega \in \langlogic_{\ltllogic}$ such that $\history \models \omega$. Suppose that there exists some history $\history'$ compatible with $\pldomain$ such that $\history' \models \neg\omega$. We will now show that there exists some $i \in \agentset$ such that $i$ bears CCR for $\omega$ in $\plan$.

Let $\agentset = \{i_1,\ldots,i_n\}$. Fixing the actions of the coalition $\agentset$ exactly determines the history, so the actions of $\agentset$ are sufficient to guarantee $\omega$. However, the actions of the coalition $\emptyset$ are not sufficient to guarantee $\omega$ given that $\history' \models \neg\omega$. Now consider the sequence of coalitions $\agentset, \agentset \setminus \{i_1\},\ldots,\{i_n\},\emptyset$. Then there must exist some pair of coalitions $J, J \setminus \{i_m\}$ in the sequence such that the actions of $J$ guarantee $\omega$ but the actions of $J \setminus \{i_m\}$ do not. Therefore $i_m$ bears CCR for $\omega$ in $\plan$.
\end{proof}

\begin{theorem}\label{thm:figure}
The implications shown in figure \ref{figure3} are correct.
\end{theorem}

\begin{figure}[h!]
\begin{center}
\includegraphics[scale = 0.35]{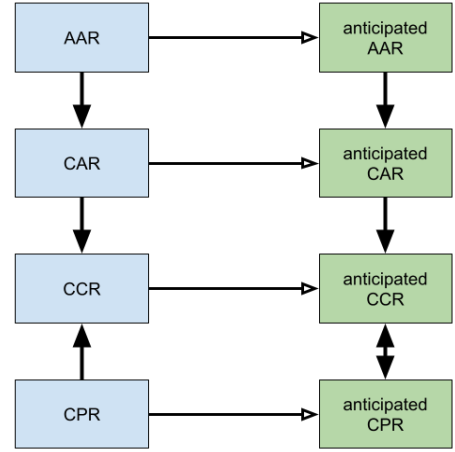}
\caption{A visual representation of the implications between our different forms of responsibility.}\label{figure3}
\end{center}
\end{figure}

\begin{proof}
Firstly, all horizontal arrows are trivially true from the definition of anticipated responsibility.

\paragraph{(AAR $\Rightarrow$ CAR)} True from definitions.

\paragraph{(CAR $\Rightarrow$ CCR)} Suppose $i$ beasrs CAR for $\omega$ in $\plan$. Then the coalition $J = \{i\}$ meets the requirements for CCR since $J \setminus \{i\} = \emptyset$ and since all possible plans are compatible with $\plan^{\{i\}}$ and CAR ensures that there is some plan where $\omega$ does not occur.

\paragraph{(CPR $\Rightarrow$ CCR)} Suppose $i$ bears CAR for $\omega$ in $\plan$. Then the coalition $J = \agentset$ meets the requirements for CCR.

\paragraph{}Note that for responsibility notions X and Y, if X $\Rightarrow$ Y then it must also be the case that anticipated X $\Rightarrow$ anticipated Y since any outcome where $i$ anticipates $X$ must also be one where they anticipate $Y$. Therefore the only remaining proof is:

\paragraph{(anticipated CCR $\Rightarrow$ anticipated CPR)} Suppose $i$ anticipates CCR for $\omega$ in $\plan$. Then there is some joint plan $\plan_1$ compatible with $\plan$ and some coalition $J$ including $i$ such that the actions of $J$  in $\plan_1$ are sufficient to guarantee $\omega$ but the actions of $J \setminus \{i\}$ are not. Therefore there exists some plan $\plan_2$ compatible with $\plan_1^{J \setminus \{i\}}$ where $\omega$ does not occur. Let $\plan_3 = \plan_2^{\agentset \setminus \{i\}} \cup \plan$. Then $i$ bears CPR for $\omega$ in $\plan_3$ and therefore $i$ anticipates CPR in $\plan$.

\end{proof}

\begin{theorem}
Let $\pldomain = (\effprecond, s_0, (\epequiv_i)_{i\in \agentset})$ be a PPD, $i \in \agentset$, and $\omega$ an $\ltllogic$-formula. Then there exists some individual plan $\plan$ for $i$ such that $i$ does not bear AIAR for $\omega$ in $\plan$.
\end{theorem}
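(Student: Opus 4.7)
The plan is to split on whether or not $\omega$ is inevitable in $\pldomain$, and to handle each case separately, exploiting the fact that the definition of AAR has both a universal ``guarantee'' clause and an existential ``avoidability'' clause. In the inevitable case, the avoidability clause will fail for trivial reasons; in the avoidable case, a concrete witness joint plan can be reused to defeat the guarantee clause for a well-chosen individual plan.

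First, suppose $\omega$ is inevitable, meaning $\history^{\plan',s_0,\effprecond} \models \omega$ for every joint plan $\plan' \in \jointplanset{}{\agentset}$. Then for any individual plan $\plan$ for $i$ and any joint plan $\plan_1$ compatible with $\plan$, the CAR clause inside AAR (namely, the existence of some joint plan under which $\omega$ fails from $s_0$) cannot be satisfied. Hence $i$ bears neither CAR nor AAR for $\omega$ in any $\plan_1$, and so $i$ does not anticipate AAR for $\omega$ in any individual plan. Pick $\plan$ arbitrary.

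Otherwise, there exists some joint plan $\plan^\ast \in \jointplanset{}{\agentset}$ with $\history^{\plan^\ast,s_0,\effprecond} \not\models \omega$. Choose $\plan := (\plan^\ast)^{\{i\}}$, the restriction of $\plan^\ast$ to $i$. I claim that $i$ does not anticipate AAR for $\omega$ in $\plan$. Indeed, suppose for contradiction that there were $s_1 \in \epequiv_i$ and a joint plan $\plan_1$ compatible with $\plan$ such that $i$ bears AAR for $\omega$ in $(\plan_1, s_1)$. Note that any $\plan_1$ compatible with $\plan$ satisfies $\plan_1^{\{i\}} = \plan = (\plan^\ast)^{\{i\}}$, so $\plan^\ast$ is itself compatible with $\plan_1^{\{i\}}$. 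By the AAR clause, we would need $\history^{\plan^\ast, s', \effprecond} \models \omega$ for every $s' \in \epequiv_i$. However, by the standing assumption recalled in Section~\ref{sec:resp} that agents always consider the true initial state epistemically possible, $s_0 \in \epequiv_i$, and $\history^{\plan^\ast,s_0,\effprecond} \not\models \omega$, contradiction.

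The argument is essentially a two-line case split, so I do not anticipate any genuine obstacle; the only delicate point is bookkeeping the two quantifier layers in the definition of AAR (over joint plans compatible with the fixed individual plan, and over epistemically possible initial states) and using the assumption that $s_0 \in \epequiv_i$ to instantiate the universal clause at the real initial state with the witness plan $\plan^\ast$. The theorem therefore follows in both cases.
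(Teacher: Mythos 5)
Your Case~2 is correct and is essentially the paper's argument, but Case~1 contains a genuine gap. You split on whether $\omega$ is inevitable \emph{from $s_0$}, and in the inevitable case you claim the existential (``avoidability'') clause of CAR fails for every candidate joint plan. However, anticipated AAR existentially quantifies over $s_1 \in \epequiv_i$, and the conditions inside ``$i$ bears AAR for $\omega$ in $(\plan_1,s_1)$'' are evaluated with $s_1$ as the initial state, not $s_0$. So inevitability from $s_0$ does not refute the clause ``there exists $\plan_3$ with $\history^{\plan_3,s_1,\effprecond}\not\models\omega$'' when $s_1\neq s_0$. Concretely: take $\epequiv_i=\{s_0,s_1\}$, a single agent with actions $a$ and $\skipact$, and $\omega=\eventually p$, where $p$ becomes true unconditionally from $s_0$ but becomes true from $s_1$ only if $i$ performs $a$. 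Then $\omega$ is inevitable from $s_0$, yet the plan $[1\mapsto a]$ \emph{does} anticipate AAR for $\omega$ (witnessed by $s_1$): it guarantees $\omega$ from both states, while $[1\mapsto\skipact]$ avoids $\omega$ from $s_1$. Hence ``pick $\plan$ arbitrary'' fails in your Case~1.

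The repair is to anchor the case split at the epistemic set rather than at $s_0$, which is what the paper does: either there exist a joint plan $\plan^\ast$ and a state $s_1\in\epequiv_i$ with $\history^{\plan^\ast,s_1,\effprecond}\not\models\omega$ --- in which case $(\plan^\ast)^{\{i\}}$ defeats the second, universal clause of AAR exactly as in your Case~2 (your argument there only needs the witness state to lie in $\epequiv_i$, not to equal $s_0$) --- or $\omega$ holds under every joint plan from every state of $\epequiv_i$, in which case the existential clause of CAR fails at every $s_1\in\epequiv_i$ and no plan anticipates AAR.
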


\begin{proof}
Suppose that there exists some joint plan $\plan$ and some initial state $s_1 \in (\epequiv_i)$ such that $\history^{\plan,s_0,\effprecond} \models \neg \omega$. Then $i$ does not bear AIAR for $\omega$ in $\plan^{\{i\}}$.

Suppose instead that in all joint plans $\plan$ and all initial states $s_1 \in (\epequiv_i)$, $\history^{\plan,s_0,\effprecond} \models \omega$. Then $i$ does not bear IAR for $\omega$ in any joint plan $\plan$, and therefore does not bear AIAR for $\omega$ in any individual plan $\plan^{\{i\}}$.
\end{proof}

\begin{definition}[Powerlessness]
Let $\pldomain = (\effprecond, s_0, (\epequiv_i)_{i\in \agentset})$ be a PPD, $\plan$ a joint plan, $i \in \agentset$, and $\omega$ an $\ltllogic$-formula. Then we say that $i$ is powerless with respect to $\omega$ in $\phi$ if changing the actions of $i$ in $\plan$ does not affect the value of $\omega$.
\end{definition}

\begin{theorem}
Let $\pldomain = (\effprecond, s_0, (\epequiv_i)_{i\in \agentset})$ be a PPD, $i \in \agentset$, and $\omega$ an $\ltllogic$-formula. Then if there exists some individual plan $\plan$ for $i$ such that $i$ bears AIAR for $\neg \omega$ in $\plan$, for any plan $\plan'$ for $i$, $i$ does not bear AIPR for $\omega$ in $\plan'$ if and only if $i$ bears AIAR for $\neg \omega$ in $\plan'$.
\end{theorem}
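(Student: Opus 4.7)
The plan is to follow the proof sketch given in the main body, formalising the notion of powerlessness and splitting the argument into a dichotomy that is resolved by the hypothesis. Say that $i$ is \emph{powerless} with respect to $\omega$ in a joint plan $\plan$ and initial state $s$ if for every joint plan $\plan''$ compatible with $\plan^{-\{i\}}$ we have $\history^{\plan'',s,\effprecond}\models \omega$ iff $\history^{\plan,s,\effprecond}\models \omega$; that is, swapping out $i$'s sub-plan does not alter the truth value of $\omega$. Then I would establish two auxiliary claims, each essentially a contrapositive of the other.

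First I would prove Claim A: if there exist a joint plan $\plan_0$ and $s^\star\in\epequiv_i$ such that $i$ is powerless in $(\plan_0,s^\star)$ and $\history^{\plan_0,s^\star,\effprecond}\models\neg\omega$, then no individual plan for $i$ anticipates AAR for $\omega$. Given any candidate $\plan'$ for $i$, form the joint plan $\plan''=\plan'\cup\plan_0^{-\{i\}}$. Powerlessness in $(\plan_0,s^\star)$ transfers to $(\plan'',s^\star)$, so $\history^{\plan'',s^\star,\effprecond}\models\neg\omega$, which violates the universal clause in the definition of anticipated AAR at $\plan'$. Hence, under the theorem's hypothesis that some plan $\plan$ does anticipate AAR for $\omega$, Claim A's premise must fail: for every joint plan $\plan$ and every $s\in\epequiv_i$ where $i$ is powerless, $\omega$ must hold in $\history^{\plan,s,\effprecond}$.

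Second I would prove Claim B: under the condition that powerlessness always coincides with $\omega$ holding, and given the global existence of a plan anticipating AAR for $\omega$, the equivalence holds for every $\plan'$. For the forward direction, if $i$ anticipates AAR for $\omega$ in $\plan'$, then $\omega$ holds in every history generated by any joint plan extending $\plan'$ from any $s_1\in\epequiv_i$, so the defining witness of anticipated CPR for $\neg\omega$ (a history where $\neg\omega$ holds) cannot be produced. For the backward direction, suppose $i$ does not anticipate CPR for $\neg\omega$ in $\plan'$, and suppose for contradiction there exist $s_1\in\epequiv_i$ and a joint plan $\plan_1$ extending $\plan'$ with $\history^{\plan_1,s_1,\effprecond}\models\neg\omega$. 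Failure of anticipated CPR at $\plan'$ forces that for every $\plan_2$ compatible with $\plan_1^{-\{i\}}$, $\history^{\plan_2,s_1,\effprecond}\models\neg\omega$, i.e.\ $i$ is powerless in $(\plan_1,s_1)$ with $\omega$ failing, contradicting Claim B's premise. This yields the universal clause of AAR anticipation, and the non-inevitability clause is inherited from the hypothesised plan $\plan$, since any history from $\plan$ and its witnessing $s_2\in\epequiv_i$ with $\neg\omega$ are already at hand.

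The main obstacle I anticipate is the bookkeeping around the non-inevitability clause of AAR: it does not follow from merely failing to anticipate CPR, so the theorem's hypothesis has to be carried all the way through, and one must be careful to parametrise powerlessness by the epistemically possible state $s$ rather than only $s_0$, since the definitions of anticipation quantify over $\epequiv_i$. Once these two details are pinned down, combining Claim A (ruled out by hypothesis) with Claim B gives the desired biconditional for every plan $\plan'$.
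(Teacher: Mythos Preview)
Your approach is the same as the paper's---argue via the powerlessness dichotomy, use the hypothesis to rule out the case where powerlessness coexists with the ``wrong'' truth value of the outcome, then read off both implications---and your write-up is in fact more careful than the paper's own: you explicitly parametrise powerlessness by the epistemically possible start state and you explicitly carry the non-inevitability witness from the hypothesised plan $\plan$ over to $\plan'$, both of which the paper's supplementary proof leaves implicit. The one slip is purely notational: throughout your proposal you have $\omega$ and $\neg\omega$ interchanged relative to the statement you were given (you prove the equivalence between anticipating AAR for $\omega$ and not anticipating CPR for $\neg\omega$, which is the main-body phrasing, whereas the stated theorem concerns AAR for $\neg\omega$ versus CPR for $\omega$); a uniform relabelling $\omega\mapsto\neg\omega$ repairs this with no change to the argument.
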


\begin{proof}
Let $\pldomain = (\effprecond, s_0, (\epequiv_i)_{i\in \agentset})$ be a PPD, $i \in \agentset$, and $\omega$ an $\ltllogic$-formula. Suppose there exists some individual plan $\plan$ for $i$ such that $i$ bears AIAR for $\neg \omega$ in $\plan$. Therefore there must be some joint plan $\plan \cup \plan''$ where $i$ is not powerless with respect to $\omega$. Furthermore, in every joint plan $\plan \cup \plan''$ where $i$ is powerless with respect to $\omega$, $\neg \omega$ occurs. 

\paragraph{$\Rightarrow$} Suppose that for some plan $\plan'$, $i$ does not bear AIPR for $\omega$ in $\plan'$. Then in every joint plan $\plan' \cup \plan''$ where $i$ is not powerless, $\neg \omega$ occurs. Since we know that $\neg \omega$ also occurs whenever $i$ \textit{is} powerless, we can conclude that $i$ bears AIAR for $\neg \omega$.

\paragraph{$\Leftarrow$} Suppose that for some plan $\plan'$, $i$ bears AIAR for $\neg \omega$ in $\plan'$. Then in every joint plan $\plan' \cup \plan''$ where $i$ is not powerless, $\neg \omega$ occurs. Therefore we can conclude that $i$ does not bear AIPR for $\omega$.
\end{proof}

\begin{theorem}
Let $\pldomain = (\effprecond, s_0, (\epequiv_i)_{i\in \agentset})$ be a PPD and $\omega$ an $\ltllogic$-formula. Let $\plan$ be a joint plan such that for every agent $i \in \agentset$, $i$ does not bear AIPR for $\omega$ in $\plan^{\{i\}}$. Then either $\history' \models \omega$ for every history compatible with $\pldomain$, or $\history^{\plan,s_0,\effprecond} \models \neg \omega$.
\end{theorem}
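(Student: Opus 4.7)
The plan is to argue by contraposition, reducing the statement to a direct application of Theorem \ref{thm:completeness} together with the implication diagram of Theorem \ref{thm:figure}. Concretely, I would assume that the conclusion fails on both sides: there exists some history $\history^{\star}$ compatible with $\pldomain$ with $\history^{\star} \models \omega$ (so $\neg\omega$ is not inevitable), and simultaneously $\history^{\plan,s_0,\effprecond} \models \neg\omega$. The goal is then to exhibit some agent that anticipates CPR for $\neg\omega$ in $\plan^{\{i\}}$, contradicting the standing hypothesis on $\plan$. I want to emphasise that the outcome to which the completeness theorem should be applied is $\neg\omega$, not $\omega$: it is $\neg\omega$ that actually occurs along the plan's history and that is not inevitable, and it is responsibility for $\neg\omega$ that the hypothesis and conclusion both concern.

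Next I would instantiate Theorem \ref{thm:completeness} with outcome $\neg\omega$ and the fixed joint plan $\plan$. The hypotheses of that theorem are met: $\history^{\plan,s_0,\effprecond} \models \neg\omega$ by assumption, and $\history^{\star} \not\models \neg\omega$ rules out the first disjunct in its conclusion (that $\neg\omega$ hold in every history compatible with $\pldomain$). The completeness theorem therefore delivers some agent $i \in \agentset$ such that $i$ bears CCR for $\neg\omega$ in $\plan$.

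Finally I would trace this through Theorem \ref{thm:figure}. The horizontal arrow relating attribution to anticipation gives that $i$ anticipates CCR for $\neg\omega$ in $\plan^{\{i\}}$, and the vertical implication from anticipated CCR to anticipated CPR yields that $i$ anticipates CPR for $\neg\omega$ in $\plan^{\{i\}}$. This contradicts the hypothesis that no agent anticipates CPR for $\neg\omega$ in the corresponding individual projection of $\plan$, completing the argument. Since the real work is done inside the two referenced theorems, there is no substantive new mathematical content; the only pitfall, and thus the main obstacle to watch for, is orientation — making sure both theorems are applied to $\neg\omega$ and that the chain CCR-attribution $\Rightarrow$ anticipated CCR $\Rightarrow$ anticipated CPR is traversed in exactly that direction, so as not to end up proving the dual assertion about $\omega$ rather than $\neg\omega$.
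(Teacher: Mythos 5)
Your proposal follows exactly the paper's own route: negate both disjuncts of the conclusion, invoke Theorem~\ref{thm:completeness} to extract an agent bearing CCR for the outcome that actually occurs along $\history^{\plan,s_0,\effprecond}$, then traverse Theorem~\ref{thm:figure} via the horizontal arrow (attribution implies anticipation in $\plan^{\{i\}}$) and the vertical arrow (anticipated CCR implies anticipated CPR) to contradict the hypothesis. In substance this is correct and there is no missing mathematical content.

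However, the orientation issue you flagged as the main pitfall is one you then resolved the wrong way for the statement \emph{as given here}. This appendix formulation is the main-text theorem with $\omega$ and $\neg\omega$ interchanged: its hypothesis is that no agent anticipates passive responsibility (AIPR) for $\omega$ itself, and the negation of its conclusion is ``some compatible history satisfies $\neg\omega$, and $\history^{\plan,s_0,\effprecond} \models \omega$''. Your assumed configuration ($\history^{\star} \models \omega$ and plan history $\models \neg\omega$) is not that negation --- it actually \emph{contains} the stated conclusion's second disjunct --- and your terminal contradiction (some agent anticipates CPR for $\neg\omega$) does not engage the stated hypothesis, which concerns $\omega$. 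So what you have written is a proof of the main-text phrasing; for this statement you must apply Theorem~\ref{thm:completeness} to $\omega$ --- it is $\omega$ that occurs in the plan's history and is not inevitable once the conclusion is negated --- and contradict the no-AIPR-for-$\omega$ hypothesis, or else explicitly note that substituting $\neg\omega$ for $\omega$ (harmless, since $\history \models \neg\neg\omega$ iff $\history \models \omega$ and all responsibility notions factor through $\models$) transfers your version to this one. You are in good company: the paper's own appendix proof makes the mirror-image slip, supposing ``$\history' \models \omega$ for some compatible history'' where $\history' \models \neg\omega$ is what is needed to rule out the first disjunct of Theorem~\ref{thm:completeness}'s conclusion.
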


\begin{proof}
Let $\pldomain = (\effprecond, s_0, (\epequiv_i)_{i\in \agentset})$ be a PPD and $\omega$ an $\ltllogic$-formula. Let $\plan$ be a joint plan such that for every agent $i \in \agentset$, $i$ does not bear AIPR for $\omega$ in $\plan^{\{i\}}$. Suppose that $\history' \models \omega$ for some history compatible with $\pldomain$.

Suppose for contradiction that $\history^{\plan,s_0,\effprecond} \models \omega$. Then by theorem \ref{thm:completeness}, it must be the case that some agent $i$ bears CCR for $\omega$ in $\plan$. By theorem \ref{thm:figure} this means that $i$ bears AIPR for $\omega$ in $\plan^{\{i\}}$, which is a contradiction.

\end{proof}

\subsection{Complexity Results and Algorithms}\label{sec:theorems}

\subsubsection{Preliminary Results}




\begin{cproblem}{LTLf-SATSET}
INPUT: A PD $\nabla = (\effprecond, s_0)$ for a set of agents $\agentset$, a joint plan $\plan$, $\phi\in \langlogic_{\ltllogic+}$.\\
QUESTION: Is $\phi$ satisfied by the execution of $\plan$ in $\nabla$?
\end{cproblem}

\begin{theorem*}
LTLf-SATSET is in P
\end{theorem*}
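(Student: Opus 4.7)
The plan is to give a direct algorithmic argument: compute the unique history generated by $\plan$ in $\nabla$, then model-check $\phi$ against it by standard bottom-up labelling. Both stages will be shown to run in time polynomial in $|\nabla| + |\plan| + |\phi|$.

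First I would observe that, since $\plan$ fixes the actions of every agent at every time step, the $\effprecond$-compatible history $\history^{\plan,s_0,\effprecond}$ is uniquely determined. To construct it, I would iterate over $t = 0, 1, \ldots, k-1$, where $k$ is the length of $\plan$. At each step the action $\historyact(i,t) = \plan(i)(t)$ is known, so the atomic formulas $do(j,b)$ at time $t$ have fixed truth values, and each $\effprecondplus(i,a,p)$ and $\effprecondminus(i,a,p)$ reduces to a propositional formula that can be evaluated against $\historyst(t)$ in time linear in its length. Applying the inertial rule from the definition of $\effprecond$-compatibility then yields $\historyst(t+1)$. The total cost is bounded by $k \cdot |\agentset| \cdot |\actset| \cdot |\propset|$ evaluations, each polynomial in $|\effprecond|$, which is clearly polynomial in the input.

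Second, given the fully materialised history $\history$ of length $k+1$, I would model-check $\phi$ by the standard dynamic-programming labelling procedure for LTL over finite traces. Proceeding by structural induction on subformulas of $\phi$, I would maintain a table $\text{Sat}(\psi, t)$ for every subformula $\psi$ of $\phi$ and every $t \in \{0, \ldots, k\}$. Atomic cases ($p$ and $do(i,a)$) are read directly from $\history$; boolean cases are pointwise combinations; $\nexttime \psi$ uses $\text{Sat}(\psi, t+1)$; and $\until{\psi_1}{\psi_2}$ is computed either by a single backward sweep from $t = k$ to $t = 0$ or by the obvious $O(k)$ existential scan per time point. Since the number of subformulas is $O(|\phi|)$ and the number of time points is $O(k)$, the whole procedure runs in time polynomial in $|\phi|$ and $k$.

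There is no real obstacle here; the main point is just to verify that nothing in our action theory or in $\langlogic_{\ltllogic+}$ introduces hidden exponential blow-up. The only place that could be worrying is the evaluation of the effect precondition formulas, but these are given explicitly as part of $\effprecond$ and are evaluated on concrete states with known action atoms, so each evaluation is linear in the size of that formula. Combining the two stages yields a polynomial-time decision procedure for LTLf-SATSET, which gives the claim.
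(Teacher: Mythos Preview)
Your proposal is correct and follows essentially the same two-stage approach as the paper: first materialise the unique $\effprecond$-compatible history step by step, then model-check $\phi$ against it. The paper's model-checking argument is framed as a strong induction on formula length with a case analysis (atoms, $\neg$, $\wedge$, $\nexttime$, $\mathbf{U}$), whereas you phrase it as a bottom-up labelling table $\mathrm{Sat}(\psi,t)$; these are the same idea, and if anything your version makes the overall $O(|\phi|\cdot k)$ bound more explicit than the paper's per-case ``in polynomial time'' claims.
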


\begin{proof}
To show that LTLf-SATSET is in P, we present a basic algorithm that uses polynomial time.

We begin by showing that generating the unique history associated to a joint plan can be done in polynomial time. Then we show that evaluating an $\ltllogic+$ formula over this history can be done in polynomial time.

The $\historyact$ associated to $\plan$ is effectively just $\plan$, so this can be generated in polynomial time. We then set $\historyst(0) = s_0$, then for each $\historyst(i)$ we use $\gamma$ to generate $\historyst(i+1)$ in polynomial time by model checking all formulas in $\gamma$. We need to repeat this process a number of times linear with the size of the input (specifically, the length of $\plan$).

Let us now show that $\ltllogic+$ formulas can be checked in polynomial time on $\history$. Let $n$ be the length of $\phi$. We proceed by strong induction on $n$. 
Let $\history_j$ be the history such that $\historyactj(i) = \historyact(i+j)$ and $\historystj(i) = \historyst(i+j)$.

\textbf{Base case.} Suppose n = 1, then either $\phi = p$ for some $p \in \propset$ or $\phi = do(i,a)$ for some agent $i \in \agentset$ and action $a \in \actset$. We can determine if $p \in \historyst(0)$ in polynomial time, and we can determine if $\historyact(i,0) = a$ in polynomial time.

\textbf{Inductive step.} Suppose $n > 1$ and that the claim holds for all $m < n$. Then we have several options for $\phi$.

\begin{enumerate}
    \item $\phi = \neg \psi$. Then by inductive hypothesis we can determine in polynomial time if $H \models \psi$ and thus if $H \models \phi$.
    \item $\phi = \psi \land \chi$. Then we can determine in time $P$ if $\history \models \psi$ and $\history \models \chi$.
    \item $\phi = \nexttime \psi$. Then we can determine (in polynomial time) if $\history_1 \models \psi$.
    \item $\phi = \until{\psi}{\chi}$. Then for $0 < j < k$ we can determine if $\history_0, \history_1, ... , \history_{j-1} \models \psi$ and $\history_j \models \chi$ in polynomial time. Therefore this whole process can be done in polynomial time.
\end{enumerate}
\end{proof}

\begin{cproblem}{LTLf-MA-PLANMIN-POLY}
INPUT:  An integer $k$ whose value is polynomially large relative to the input size, a set of agents $\agentset$, a PD $\nabla = (\effprecond, s_0)$ and a $\langlogic_{\ltllogic+}$ formula $\phi$.\\
QUESTION: Does there exist a plan $\Pi$ of length at most $k$ such that $\history^{\plan,s_0,\effprecond}, 0 \models \phi?$
\end{cproblem}

\begin{theorem*}
LTLf-MA-PLANMIN-POLY is NP-complete.
\end{theorem*}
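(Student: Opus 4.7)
The plan is to establish membership in NP and NP-hardness separately, with the former relying directly on the preceding lemma (LTLf-SATSET $\in$ P) and the latter via a reduction from propositional satisfiability.

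For membership in NP, I would first observe that any candidate plan $\plan$ of length at most $k$ has total description size polynomial in the input: each $k$-action-sequence is a function $\{0,\ldots,k-1\} \to \actset$, so $\plan : \agentset \to \seqset{k}$ has size at most $|\agentset| \cdot k \cdot \log|\actset|$, which is polynomial because $k$ is polynomial in the input by assumption. Hence a non-deterministic machine can guess $\plan$ in polynomial time. To verify the guess, invoke the LTLf-SATSET algorithm on $(\nabla, \plan, \phi)$, which runs in polynomial time by the previous theorem. The machine accepts iff the execution of the guessed plan satisfies $\phi$, giving membership in NP.

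For NP-hardness I would reduce from \textsc{Sat}. Given a propositional formula $\psi$ over variables $x_1,\ldots,x_n$, I would construct a single-agent planning domain with $\propset = \{x_1,\ldots,x_n\}$ and two actions $\{\textit{set}_p, \textit{unset}_p\}$ for each variable $p$ (or a single parameterised family, however one wishes to present it), with effect preconditions $\effprecondplus(1,\textit{set}_p, p) = \top$, $\effprecondminus(1,\textit{unset}_p, p) = \top$, and all other effect preconditions set to $\bot$. Starting from $s_0 = \emptyset$, choose the bound $k = n$ and the goal formula $\phi = \nexttime^{n}\psi$ (i.e., $\psi$ evaluated after $n$ steps). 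A plan of length $n$ then corresponds exactly to an assignment: the variable $x_i$ is true at time $n$ iff the action at step $i{-}1$ is $\textit{set}_{x_i}$ (and otherwise false, since $s_0$ is empty and the inertial principle preserves variables only in the absence of effects). Thus $\psi$ is satisfiable iff the planning instance admits a plan of length at most $n$, and the reduction is clearly polynomial.

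The main obstacle is ensuring that the reduction produces a plan whose final state exactly encodes a chosen assignment, which requires one to be careful about the inertial principle and about how concurrent or redundant actions could spoil the correspondence; using a single agent and disjoint effect preconditions across actions sidesteps any such complications. A minor technical point is the encoding of $\nexttime^{n}\psi$, which is polynomial in $n$ and hence in the input size, so the resulting $\langlogic_{\ltllogic+}$ formula fits within the polynomial reduction. Combining the two directions yields NP-completeness.
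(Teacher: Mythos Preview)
Your proposal is correct and follows essentially the same approach as the paper: guess a polynomial-size plan and verify via LTLf-SATSET for membership, and reduce from \textsc{Sat} using a single agent with per-variable ``set'' actions for hardness (the paper uses goal $\eventually\henceforth\psi$ rather than $\nexttime^{n}\psi$, and omits the superfluous $\textit{unset}$ actions, but these are cosmetic). One small imprecision: your claimed bijection ``$x_i$ is true at time $n$ iff the action at step $i{-}1$ is $\textit{set}_{x_i}$'' is false as stated, since nothing forces the agent to handle variable $x_i$ at step $i{-}1$; what actually makes the reduction work is simply that every assignment is reachable by \emph{some} length-$n$ plan and every plan's time-$n$ state is \emph{some} assignment, which suffices for the iff.
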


\begin{proof}
To show that LTLf-MA-PLANMIN-POLY is NP-hard we show a reduction from SAT \cite{Cook71}. Given an instance of SAT $\phi$. We will create $\delta = (k,\agentset,(\effprecond,s_0),\phi')$ such that LFLf-MA-PLANMIN-POLY on $\delta$ is equivalent to SAT on $\phi$.

Let $\agentset = \{A_1\}$. Let $s_0 = \emptyset$ and let $\phi' = \eventually \henceforth \phi$. Let $\actset$ = $\{\mathit{Set }p : p \in \propset\} \cup \{\skipact\}$. We then define $\effprecond$ as follows:
\begin{align*}
    &\effprecondplus(A_1,\mathit{Set }p,p) = \top \text{ for all } p \in \propset\\
    &\effprecondplusminus(A_1,\mathit{Set }p,p') = \bot \text{ unless specified above}
\end{align*}

Let $\propset$ be the set of propositions appearing in $\phi$, set $k = |\propset|$. This ensures that $k$ is polynomially large relative to $\delta$.

To show that LTLf-MA-PLANMIN-POLY is NP-complete we present an NP algorithm. Let $\delta = (k,\agentset,(\effprecond,s_0),\phi)$ be an instance of LTLf-MA-PLANMIN-POLY. Then guess a joint plan $\plan$ of length at most $k$. We know that the size of $\plan$ must be polynomial in the size of the input as the value of $k$ is polynomial in the size of the input.

We know from LTLf-SATSET that we can verify in polynomial time if $\history^{\plan,s_0,\effprecond} \models \phi$. If this is the case, then output $\true$, otherwise output $\false$.
\end{proof}

\subsubsection{Responsibility Attribution}

\begin{cproblem}{X-ATTRIBUTION}
VARIANTS: X $\in \{$CAR, CPR, CCR, AAR$\}$\\
INPUT: A PPD $\pldomain=(\effprecond, s_0,(\epequiv_i)_{i\in \agentset})$ for a set of agents $\agentset$, a joint plan $\plan$, and agent $i$ and a formula $\omega \in \langlogic_{\ltllogic}$.\\
QUESTION: Does $i$ bear X for $\omega$ in $\plan$? 
\end{cproblem}

\begin{theorem}
CPR-ATTRIBUTION is NP-Complete
\end{theorem}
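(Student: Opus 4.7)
The plan is to establish NP-membership and NP-hardness separately.

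For membership, a natural polynomial-size certificate is a joint plan $\plan_2$ of the same length as $\plan_1$, agreeing with $\plan_1$ on every agent except $i$ and under which $\omega$ fails. A nondeterministic verifier guesses $\plan_2$, checks action-by-action agreement with $\plan_1^{-\{i\}}$ in linear time, and invokes the polynomial LTLf-SATSET procedure twice to confirm $\history^{\plan_1,s_0,\effprecond} \models \omega$ and $\history^{\plan_2,s_0,\effprecond} \not\models \omega$.

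For hardness, I would reduce from SAT, in the same spirit as the reduction establishing LTLf-MA-PLANMIN-POLY. Given a SAT formula $\phi$ over $\{p_1, \ldots, p_n\}$, build a PPD with $\agentset = \{i, j\}$ where $j$ is a dummy agent with only $\skipact$ available; $\propset = \{p_1, \ldots, p_n, q\}$ for a fresh proposition $q$; $s_0 = \emptyset$; and, for agent $i$, actions $\mathit{set}^+_k, \mathit{set}^-_k$ that unconditionally set $p_k$ to true and false respectively, together with an action $b$ whose only effect is $\effprecondplus(i,b,q) = \phi$, plus $\skipact$. Take $\omega = \henceforth \neg q$ and let $\plan_1$ assign $\skipact$ to both agents at each of $n+1$ consecutive time steps. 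Then $\plan_1$ keeps the state empty throughout, so $\omega$ holds. Since $j$'s only available action is $\skipact$, requiring $\plan_2$ to agree with $\plan_1^{-\{i\}}$ imposes no restriction, and so CPR-attribution collapses to the question: does there exist a plan of length $n+1$ for $i$ that makes $q$ true at some step --- equivalently, one that executes $b$ in a state satisfying $\phi$? Because any truth assignment to $\{p_1, \ldots, p_n\}$ can be realized in $n$ steps from $\emptyset$ using the set-actions, such a plan exists iff $\phi$ is satisfiable.

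The only subtlety I expect is to verify that the construction respects the formalism of the paper: the effect precondition $\phi$ lies in $\langlogic_{\proplogic}$, and the domain description is linear in the size of the SAT instance. Both are immediate from the definition of the action theory, so I do not anticipate a genuine obstacle beyond bookkeeping the index conventions for action-compatible histories so that $b$ executed at the final step indeed fires its effect precondition in the previously assembled assignment state.
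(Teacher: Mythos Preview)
Your proof is correct and follows essentially the same strategy as the paper: NP-membership by guessing the alternative plan for $i$ and invoking LTLf-SATSET, and NP-hardness via a SAT reduction in which a single effective agent sets propositions one at a time over a plan of length roughly $|\propset|$. The only cosmetic differences are that you encode $\phi$ as an effect precondition (of your action $b$) rather than placing it inside $\omega$, and you introduce a dummy second agent $j$ that is not needed---the paper's reduction already works with a single agent.
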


\begin{proof}
To show that CPR-ATTRIBUTION is NP-hard we show a reduction from SAT. Given an instance of SAT $\phi$. We will create $\delta = ((\effprecond,s_0,(\epequiv_i)_{i \in \agentset}),\plan,i,\omega)$ such that CPR-ATTRIBUTION on $\delta'$ is equivalent to SAT on $\phi$.

Let $\propset$ be the set of propositions in $\phi$ plus the new proposition $p_0$. Let $\agentset = \{A_1\}$. Let $\actset$ = $\{\mathit{Set }p : p \in \propset \setminus \{p_0\}\} \cup \{\skipact, \actfail\}$. We then define $\effprecond$ as follows:
\begin{align*}
    &\effprecondplus(A_1,\mathit{Set }p,p) = \top \text{ for all } p \in \propset\\
    &\effprecondplusminus(A_1,\mathit{Set }p,p') = \bot \text{ unless specified above}\\
    &\effprecondplus(A_1, \actfail, p_0) = \top\\
    &\effprecondplusminus(A_1,\actfail,p) = \bot \text{ unless specified above}
\end{align*}

Let $s_0 = \emptyset$. Let $\epequiv_{A_1} = \{s_0\}$. Let $k = |\propset| - 1$. Then the value of $k$ is polynomial in the size of $\phi$. Let $\plan$ be the $k$-plan such that $A_1$ only does $\actfail$. Let $\omega = \phi \land \henceforth \neg p_0$.

To show that CPR-ATTRIBUTION is NP-complete we will present an NP algorithm. Given an instance of CPR-ATTRIBUTION $\delta = ((\effprecond, s_0,(\epequiv_i)_{i\in \agentset}),\plan,i,\omega)$, first check if $\history^{\plan,s_0,\gamma} \models \omega$. This can be done in polynomial time by use of LTLf-SATSET. If the check fails then output $\false$ and we are done.

Otherwise guess an alternative plan $\plan'$ for $i$, of the same length as $\plan$. Let $\plan_1 = \plan' \cup \plan^{-\{i\}}$, then check if $\history^{\plan_1,s_0,\gamma} \models \omega$. If yes then output $\false$, otherwise output $\true$.
\end{proof}

\begin{theorem*}
CAR-ATTRIBUTION is a member of P$^{\mathit{NP}[2]}$
\end{theorem*}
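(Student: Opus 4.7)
The plan is to exhibit a deterministic polynomial-time algorithm that decides CAR-ATTRIBUTION using exactly two queries to an NP oracle. Recall that $i$ bears CAR for $\omega$ in $(\plan,s_0,\effprecond)$ iff two conditions hold simultaneously: (i) every joint plan $\plan_2$ compatible with $\plan^{\{i\}}$ satisfies $\history^{\plan_2,s_0,\effprecond}\models\omega$, and (ii) there exists some joint plan $\plan_3$ with $\history^{\plan_3,s_0,\effprecond}\not\models\omega$. The two conditions are logically independent of each other, and each is naturally testable with a single NP-oracle call, which is precisely what $\mathrm{P}^{\mathit{NP}[2]}$ allows.

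First I would formulate the two queries. Let $Q_1$ ask: ``does there exist a joint plan $\plan_2$ of the same length as $\plan$, agreeing with $\plan$ on agent $i$, such that $\history^{\plan_2,s_0,\effprecond}\not\models\omega$?'' This lies in NP: guess the action sequences for the agents in $\agentset\setminus\{i\}$, assemble $\plan_2$ by combining them with $\plan^{\{i\}}$, and verify $\history^{\plan_2,s_0,\effprecond}\models\neg\omega$ in polynomial time using the earlier LTLf-SATSET result. A ``no'' answer to $Q_1$ is exactly condition (i). Let $Q_2$ ask: ``does there exist a joint plan $\plan_3$ of polynomially bounded length such that $\history^{\plan_3,s_0,\effprecond}\not\models\omega$?'' This is an instance of LTLf-MA-PLANMIN-POLY with goal $\neg\omega$, hence in NP by the earlier theorem. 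A ``yes'' answer to $Q_2$ is exactly condition (ii).

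The deterministic polynomial-time algorithm then issues $Q_1$ and $Q_2$ (the queries are non-adaptive and can be posed in parallel) and accepts iff $Q_1$ returns ``no'' and $Q_2$ returns ``yes''. All remaining computation is trivial bookkeeping. Since the total number of oracle calls is exactly two, this witnesses membership in $\mathrm{P}^{\mathit{NP}[2]}$.

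The main subtlety I anticipate is justifying the polynomial plan-length bound in $Q_2$: condition (ii) as literally stated ranges over $\jointplanset{}{\agentset}$, i.e.\ plans of any length. For $Q_1$ the length is fixed by $\plan$ and hence automatically polynomial in the input, so no issue arises. For $Q_2$ I would appeal to the polynomial plan-length convention already adopted elsewhere in the paper (used implicitly in LTLf-MA-PLANMIN-POLY and explicitly in the ``same length as $\plan$'' choice made in the NP-completeness proof for CPR-ATTRIBUTION). Under that convention $Q_2$ falls squarely into NP; without it one would instead obtain a PSPACE upper bound, and the theorem would need to be restated. Once this point is pinned down, the two-query algorithm above yields the claimed upper bound directly.
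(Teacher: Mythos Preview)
Your proposal is correct and essentially identical to the paper's proof: the paper also issues one NP query to test whether $\neg\omega$ is achievable by some joint plan (your $Q_2$) and one to test whether $\neg\omega$ is achievable with $i$'s actions fixed (your $Q_1$), both phrased via LTLf-MA-PLANMIN-POLY with $k$ set to the length of $\plan$. The only cosmetic difference is that the paper first performs a polynomial-time check that $\history^{\plan,s_0,\effprecond}\models\omega$, but this is redundant since $\plan$ is itself compatible with $\plan^{\{i\}}$ and is therefore already covered by $Q_1$; your explicit discussion of the plan-length subtlety for $Q_2$ is more careful than the paper, which simply fixes $k$ to the length of $\plan$ without comment.
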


\begin{proof}
To show that CAR-ATTRIBUTION is a member of P$^{\mathit{NP}[2]}$ we present a P$^{\mathit{NP}[2]}$ algorithm.

Given an instance of CAR-ATTRIBUTION $\delta = ((\effprecond, s_0,(\epequiv_i)_{i\in \agentset}),\plan,i,\omega)$, first check (in polynomial time) if $\history^{\plan,s_0,\gamma} \models \omega$. If the check fails then output $\false$ and we are done.

Otherwise, check if $\omega$ occurs in every joint plan. This can be done by the NP Oracle using LTLf-MA-PLANMIN-POLY with $\phi = \neg \omega$ and $k$ equal to the length of $\plan$. Third, check if $\omega$ occurs is every joint plan where the actions of $i$ are fixed. This can again be done by an NP oracle using LTLf-MA-PLANMIN-POLY with $\phi = \neg \omega \land \does{i}{a_1} \land \ldots$ in order to fix the actions of $i$.
\end{proof}

\begin{theorem*}
CCR-ATTRIBUTION is a member of $\Sigma_2^p$.
\end{theorem*}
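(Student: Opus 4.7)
The plan is to place CCR-ATTRIBUTION in $\mathit{NP}^{\mathit{NP}} = \Sigma_2^P$ by having a nondeterministic polynomial-time machine guess a candidate witnessing coalition $J \subseteq \agentset$ with $i \in J$, and then making two calls to an NP oracle to verify the two CCR conjuncts for that $J$.

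First I would deterministically verify $\history^{\plan, s_0, \effprecond} \models \omega$ in polynomial time via LTLf-SATSET; if this check fails, reject. Next, nondeterministically guess a subset $J \subseteq \agentset$ containing $i$ (a polynomial-size guess since $|J| \le |\agentset|$). The crucial encoding step is to rewrite the compatibility constraint ``$\plan'$ is a length-$k$ joint plan extending $\plan^K$'' as a polynomial-size $\langlogic_{\ltllogic+}$ formula $\psi_K$, namely the conjunction, over all $j \in K$ and all $0 \le t < k$, of $\nexttime^t do(j, \plan(j)(t))$, where $k$ is the length of $\plan$ and $\nexttime^t$ denotes $t$-fold application of $\nexttime$. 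Because the action atoms $do(j,a)$ belong to $\langlogic_{\ltllogic+}$, $\psi_K$ has polynomial size and pins down exactly the actions of the agents in $K$.

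With $\psi_K$ in hand, CCR condition (a) for the guessed $J$ becomes the \emph{non-existence} of a length-$k$ joint plan satisfying $\psi_J \land \neg \omega$, while condition (b) becomes the \emph{existence} of a length-$k$ joint plan satisfying $\psi_{J \setminus \{i\}} \land \neg \omega$. Each is an instance of LTLf-MA-PLANMIN-POLY (already shown to be NP-complete in the supplementary material) and can therefore be decided by a single NP oracle call. The nondeterministic branch labelled $J$ accepts precisely when the first oracle query returns \textsf{NO} and the second returns \textsf{YES}, and some branch accepts iff a witnessing coalition exists, so the whole procedure runs in $\Sigma_2^P$.

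The main technical point I expect to have to check is that the compatibility constraint really does collapse to a polynomially succinct $\langlogic_{\ltllogic+}$ formula that can be plugged straight into LTLf-MA-PLANMIN-POLY; once $\psi_K$ is written down and justified, the rest of the argument is routine bookkeeping on top of the already-established NP bound for plan search.
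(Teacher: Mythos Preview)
Your proposal is correct and follows essentially the same approach as the paper: deterministically check that $\omega$ holds on the given history, nondeterministically guess the coalition $J \ni i$, then use two NP oracle calls (via LTLf-MA-PLANMIN-POLY) to check that fixing the actions of $J$ guarantees $\omega$ while fixing only $J \setminus \{i\}$ does not. You are in fact more explicit than the paper about how to encode ``fix the actions of $K$'' as a polynomial-size $\langlogic_{\ltllogic+}$ formula $\psi_K$, which the paper leaves implicit (it just writes $\neg\omega \land do(i,a_1) \land \ldots$ in the analogous CAR proof).
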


\begin{proof}
To show that CCR-ATTRIBUTION is a member of $\Sigma_2^p$ we present a $\Sigma_2^p$ algorithm.

Given an instance of CCR-ATTRIBUTION $\delta = ((\effprecond, s_0,(\epequiv_i)_{i\in \agentset}),\plan,i,\omega)$, first check (in polynomial time) if $\history^{\plan,s_0,\gamma} \models \omega$. If the check fails then output $\false$ and we are done.

Otherwise, guess a coalition of agents $J$ that includes $i$. Use the NP oracle with LTLf-MA-PLANMIN-POLY to check if $\omega$ is guaranteed when we fix the actions of $J$. If it is, check if $\omega$ is still guaranteed if we only fix the actions of $J \setminus \{i\}$.
\end{proof}

\begin{theorem*}
AAR-ATTRIBUTION is a member of $\Delta_2^p$.
\end{theorem*}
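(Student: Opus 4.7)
The plan is to give a $P^{\mathit{NP}}$ algorithm in the spirit of the CAR-ATTRIBUTION proof, with one extra round of NP-oracle calls that verifies the uniformity-over-$\epequiv_i$ clause distinguishing AAR from CAR. Recall that AAR unfolds into three requirements: (i) the fixed history $\history^{\plan,s_0,\gamma}$ satisfies $\omega$ (immediate from CAR since $\plan$ is compatible with itself); (ii) the non-inevitability clause of CAR, namely the existence of some joint plan whose execution satisfies $\neg \omega$; and (iii) the agentive clause that for every $s_1 \in \epequiv_i$ and every joint plan $\plan_2$ compatible with $\plan^{\{i\}}$, $\history^{\plan_2,s_1,\gamma} \models \omega$. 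Each of these can be discharged with polynomially many NP-oracle queries, since $\epequiv_i$ is listed explicitly as part of the input and is therefore bounded in size by the input.

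Concretely, I would first call LTLf-SATSET to verify $\history^{\plan,s_0,\gamma} \models \omega$ in polynomial time, rejecting otherwise. Next, I would make a single NP-oracle call to LTLf-MA-PLANMIN-POLY with $k = |\plan|$ and $\phi = \neg \omega$ to settle non-inevitability; a \emph{yes} establishes the CAR clause, whereas a \emph{no} forces rejection. Finally, I would iterate over every $s_1 \in \epequiv_i$ and, for each one, issue an NP-oracle query to LTLf-MA-PLANMIN-POLY on the PD $(\effprecond,s_1)$ asking whether there is a joint $k$-plan whose execution satisfies $\neg \omega \wedge \bigwedge_{t=0}^{k-1} \nexttime^{t} \does{i}{\plan^{\{i\}}(t)}$. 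Any \emph{yes} exhibits a pair $(s_1,\plan_2)$ violating the agentive clause, so I would reject; if all queries return \emph{no}, I accept.

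The complexity bookkeeping is immediate: I make $1 + |\epequiv_i|$ oracle queries, each of polynomial size, with polynomial pre- and post-processing, placing the procedure in $P^{\mathit{NP}} = \Delta_2^p$. The main (modest) obstacle is the encoding of the constraint ``$\plan_2$ is compatible with $\plan^{\{i\}}$'' inside the LTLf-MA-PLANMIN-POLY query; this is handled by the $O(k)$-size action-pinning conjunction using iterated $\nexttime$, the same device already implicit in the CAR-ATTRIBUTION proof. The only a priori worry is whether $|\epequiv_i|$ could be super-polynomial, but since $\epequiv_i$ is given explicitly as a set of states, its cardinality is bounded by the input size and the iteration remains polynomial.
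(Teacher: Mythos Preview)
Your proposal is correct and essentially matches the paper's own proof: run the CAR-ATTRIBUTION routine (a constant number of NP-oracle calls) and then, for each $s_1 \in \epequiv_i$, make one further NP-oracle call to check that $i$'s fixed actions guarantee $\omega$ from $s_1$. Your write-up is in fact more careful than the paper's---you spell out the action-pinning $\ltllogic$ encoding and justify why iterating over $\epequiv_i$ is polynomial---but the underlying algorithm is the same.
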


\begin{proof}
To show that AAR-ATTRIBUTION is a member of $\Delta_2^p$ we present a $\Sigma_2^p$ algorithm.

Given an instance of AAR-ATTRIBUTION $\delta = ((\effprecond, s_0,(\epequiv_i)_{i\in \agentset}),\plan,i,\omega)$, follow the algorithm for CAR-ATTRIBUTION to check if $i$ bears CAR for $\omega$.

If yes, then use a polynomial number of calls to the NP oracle (one for each state in $\epequiv_i$ to check if the actions of $i$ guarantee $\omega$ in every possible start state.
\end{proof}

\subsubsection{Anticipating Responsibility}

\begin{cproblem}{X-ANTICIPATION}
VARIANTS: X $\in \{$CAR, CPR, CCR, AAR$\}$\\
INPUT: A PPD $\pldomain=(\effprecond, s_0,(\epequiv_i)_{i\in \agentset})$ for a set of agents $\agentset$, a plan $\plan$ for agent $i$ and a formula $\omega \in \langlogic_{\ltllogic}$.\\
QUESTION: Does $i$ anticipate X for $\omega$ in $\plan$? 
\end{cproblem}

\begin{theorem}\label{thm:CPRANTICIPATION}
CPR-ANTICIPATION is NP-Complete
\end{theorem}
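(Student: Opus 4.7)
The plan is to prove NP-completeness by separately establishing membership and hardness.

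For membership, I would unfold the definition of anticipation: $i$ anticipates CPR for $\omega$ in $\plan$ iff there exist a start state $s_1 \in \epequiv_i$, a joint plan $\plan_1$ with $\plan_1^{\{i\}} = \plan$, and an alternative action sequence $\alpha$ for $i$ of the same length as $\plan$, such that $\history^{\plan_1,s_1,\effprecond} \models \omega$ while $\history^{\plan_1^{-\{i\}} \cup \alpha,\, s_1, \effprecond} \not\models \omega$. Each of $s_1$, the other-agents portion of $\plan_1$, and $\alpha$ has size polynomial in the input, so a non-deterministic machine can guess all three objects. Verification is then two calls of LTLf-SATSET, which the preliminary results place in P; hence the whole procedure runs in non-deterministic polynomial time.

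For NP-hardness, I would reuse the reduction from SAT already used for CPR-ATTRIBUTION. That construction fixes $\agentset = \{A_1\}$ and $\epequiv_{A_1} = \{s_0\}$. Under these restrictions the only joint plan compatible with an individual plan $\plan$ for $A_1$ is $\plan$ itself, and the only epistemically possible start state is $s_0$, so anticipating CPR for $\omega$ in $\plan$ coincides exactly with $A_1$ bearing CPR for $\omega$ in $(\plan, s_0)$. Consequently the same gadget (actions $\mathit{Set }p$ that turn individual propositions of $\phi$ on, together with the spoiler action $\actfail$ and the auxiliary proposition $p_0$) immediately yields a polynomial-time many-one reduction from SAT to CPR-ANTICIPATION.

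The main thing to be careful about is the bookkeeping around the second plan in the CPR definition: the alternative witness $\plan_2$ must be a full joint plan that agrees with the guessed $\plan_1$ on every agent other than $i$, so it is cleaner to phrase the non-deterministic guess as a triple consisting of $s_1$, the other-agents part (shared between $\plan_1$ and $\plan_2$), and the two individual action sequences for $i$, one of which is the given input plan and the other is $\alpha$. Beyond this, no new combinatorial ingredients are needed: both directions follow routinely from the preliminary result on LTLf-SATSET and from the NP-hardness already established for CPR-ATTRIBUTION.
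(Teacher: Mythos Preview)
Your proposal is correct and follows essentially the same approach as the paper: for membership you guess the epistemically possible start state, the other agents' plan, and the alternative action sequence for $i$, then verify with two polynomial-time LTLf-SATSET checks; for hardness you observe that with $\agentset=\{A_1\}$ and $\epequiv_{A_1}=\{s_0\}$ anticipation collapses to attribution, so the SAT reduction for CPR-ATTRIBUTION transfers verbatim. The only cosmetic difference is that the paper phrases the membership algorithm as ``guess $s_1$ and the other agents' plan, then run the CPR-ATTRIBUTION NP algorithm,'' whereas you unfold that inner guess explicitly; the two are equivalent.
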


\begin{proof}

To show that CPR-ANTICIPATION is NP-hard we show a reduction from SAT. We can use exactly the same reduction as for CPR-ATTRIBUTION since anticipation and attribution are equivalent in the single-agent case.

To show that CPR-ANTICIPATION is NP-complete we present an NP algorithm. Given an instance of CPR-ANTICIPATION $\delta = ((\effprecond, s_0,(\epequiv_i)_{i\in \agentset}),\plan,i,\omega)$, first guess a plan $\plan'$ for $\agentset \setminus \{i\}$ and a state $s_1 \in \epequiv_i$. Then perform the algorithm for CPR-ATTRIBUTION on $\plan \cup \plan'$ and $s_1$.
\end{proof}

\begin{theorem*}
CAR-ANTICIPATION is a member of $\Delta_2^p$.
\end{theorem*}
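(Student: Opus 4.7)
The plan is to present a polynomial-time algorithm with access to an \textsf{NP} oracle, essentially adapting the CAR-ATTRIBUTION algorithm inside an outer loop over possible start states.

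First, I would observe that instantiating the definition of anticipated responsibility with CAR yields a simpler condition than at first appears: $i$ anticipates CAR for $\omega$ in $\plan$ iff there exists $s_1 \in \epequiv_i$ such that (a) every joint plan $\plan_2$ with $\plan_2^{\{i\}} = \plan$ satisfies $\history^{\plan_2, s_1, \effprecond} \models \omega$, and (b) some joint plan $\plan_3$ satisfies $\history^{\plan_3, s_1, \effprecond} \not\models \omega$. The inner existential over the compatible joint plan $\plan_1$ in the general definition of anticipation is redundant here, because the CAR conditions depend on $\plan_1$ only through $\plan_1^{\{i\}}$, which is pinned to $\plan$.

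The algorithm then iterates over each $s_1 \in \epequiv_i$ (polynomially many, since $\epequiv_i$ is given explicitly as part of the input) and, for each, performs two calls to the \textsf{NP} oracle for LTLf-MA-PLANMIN-POLY at initial state $s_1$: one with a formula encoding $\neg \omega$ together with the constraint that agent $i$ plays $\plan$ (as in the CAR-ATTRIBUTION proof), whose negative answer witnesses condition (a); and one with $\phi = \neg \omega$ alone, whose positive answer witnesses condition (b). In both calls, $k$ can be taken to be the length of $\plan$, matching the convention of the CAR-ATTRIBUTION algorithm. Output TRUE whenever some $s_1$ satisfies both tests, FALSE otherwise.

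The total number of oracle calls is $2|\epequiv_i|$, which is polynomial in the input size, placing CAR-ANTICIPATION in $P^{\mathit{NP}} = \Delta_2^p$. The main conceptual step is the observation that the double existential over $s_1$ and $\plan_1$ collapses to an iteration over $s_1$ alone; once that is made, the rest is a bounded outer loop around the oracle machinery already developed for CAR-ATTRIBUTION, so no new ideas are required.
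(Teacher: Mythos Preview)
Your proposal is correct and follows essentially the same approach as the paper: iterate over each $s_1 \in \epequiv_i$ and make two \textsf{NP}-oracle calls per state (one for inevitability of $\omega$, one for whether $i$'s fixed actions force $\omega$). You additionally spell out why the existential over the compatible joint plan $\plan_1$ collapses, a justification the paper's proof leaves implicit.
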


\begin{proof}
To show that CAR-ANTICIPATION is in $\Delta_2^p$ we present a $\Delta_2^p$ algorithm. Given an instance of CAR-ANTICIPATION $\delta = ((\effprecond, s_0,(\epequiv_i)_{i\in \agentset}),\plan,i,\omega)$, for each $s_1 \in \epequiv_i$, we can check if $i$ anticipates CAR in $\plan$ from $s_1$ with two calls to the NP oracle, one to check if $\omega$ is inevitable, and one to check if the actions of $i$ guarantee $\omega$.
\end{proof}

\begin{theorem*}
CCR-ANTICIPATION is NP-complete.
\end{theorem*}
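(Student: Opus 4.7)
The plan is to leverage the equivalence between anticipated CCR and anticipated CPR that is already established by Theorem \ref{thm:figure}, and then piggy-back on the complexity of CPR-ANTICIPATION from Theorem \ref{thm:CPRANTICIPATION}.

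First, I would note that Theorem \ref{thm:figure} proves the implication (anticipated CCR $\Rightarrow$ anticipated CPR) directly. For the converse, I would invoke the general principle (also stated in the proof of Theorem \ref{thm:figure}) that if $X \Rightarrow Y$ at the attribution level, then the same implication lifts to the anticipation level; applied to CPR $\Rightarrow$ CCR this gives anticipated CPR $\Rightarrow$ anticipated CCR. Together, anticipated CPR and anticipated CCR are logically equivalent, so the two decision problems CPR-ANTICIPATION and CCR-ANTICIPATION coincide as sets of YES-instances.

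For NP membership, this equivalence gives an immediate algorithm: on input $((\effprecond,s_0,(\epequiv_i)_{i\in\agentset}),\plan,i,\omega)$, run the NP algorithm for CPR-ANTICIPATION from Theorem \ref{thm:CPRANTICIPATION}, namely guess $s_1 \in \epequiv_i$ and a plan $\plan'$ for $\agentset\setminus\{i\}$, form $\plan \cup \plan'$, and verify in polynomial time (via LTLf-SATSET) both that $\history^{\plan\cup\plan',s_1,\effprecond}\models\omega$ and that there exists a witnessing alternative plan for $i$ making $\omega$ fail. Since anticipated CCR and anticipated CPR agree, the algorithm accepts exactly the YES-instances of CCR-ANTICIPATION.

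For NP-hardness, the same reduction from SAT used in the proof of Theorem \ref{thm:CPRANTICIPATION} applies unchanged, because the constructed instance involves a single agent where anticipation collapses to attribution and where the equivalence between CPR and CCR via the coalition $J=\agentset=\{A_1\}$ is immediate. I would briefly verify that the reduction's YES/NO answers are preserved under the CCR reading. The main (minor) subtlety is that the equivalence anticipated CCR $\iff$ anticipated CPR must be in place before this shortcut is legal; once Theorem \ref{thm:figure} is invoked, everything else is routine. No genuine obstacle is expected — the hardest step is simply citing the two prior theorems in the correct direction.
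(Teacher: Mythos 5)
Your proposal is correct and follows exactly the paper's own argument: the paper proves this theorem in one line by citing the equivalence of anticipated CCR and anticipated CPR (from Theorem~\ref{thm:figure}) together with the NP-completeness of CPR-ANTICIPATION (Theorem~\ref{thm:CPRANTICIPATION}). Your additional care in spelling out both directions of the equivalence (the explicit anticipated CCR $\Rightarrow$ anticipated CPR arrow, plus the lifting of CPR $\Rightarrow$ CCR to the anticipation level) is a useful elaboration of what the paper leaves implicit, but it is the same route.
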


\begin{proof}
By theorem \ref{thm:figure} we know that CCR-ANTICIPATION is equivalent to CPR-ANTICIPATION, which we know to be NP-complete (theorem \ref{thm:CPRANTICIPATION}).
\end{proof}

\begin{theorem*}
AAR-ANTICIPATION is a member of $\Delta_2^p$.
\end{theorem*}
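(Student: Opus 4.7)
The plan is to present an explicit $\Delta_2^p$ algorithm, reusing the alternative characterisation of anticipated AAR already given in the paper. Recall that $i$ anticipates AAR for $\omega$ in $\plan$ iff (a) for every $s \in \epequiv_i$ and every joint plan $\plan_1$ compatible with $\plan$ we have $\history^{\plan_1,s,\effprecond}\models \omega$, and (b) there exist $s \in \epequiv_i$ and a joint plan $\plan_2$ with $\history^{\plan_2,s,\effprecond}\models \neg\omega$. Both conditions can be checked by polynomially many calls to an NP oracle, mirroring the pattern used in the CAR-ANTICIPATION and AAR-ATTRIBUTION proofs.

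Concretely, I would first observe that $\epequiv_i$ is part of the PPD input and therefore listed explicitly, so $|\epequiv_i|$ is polynomial in the input size. To check condition (a), I iterate over each $s \in \epequiv_i$ and issue one NP-oracle query to LTLf-MA-PLANMIN-POLY with initial state $s$, plan length equal to $|\plan|$, and target formula $\neg\omega \land \bigwedge_{t,a}\bigl(\plan(t){=}a \to \does{i}{a,t}\bigr)$, i.e.\ $\neg\omega$ conjoined with the polynomial-size constraint that fixes $i$'s actions to $\plan$. Condition (a) holds iff every such query returns false. To check condition (b), I issue, for each $s \in \epequiv_i$, one further NP-oracle query asking whether there is any joint plan of length $|\plan|$ from $s$ satisfying $\neg\omega$; condition (b) holds iff at least one query returns true.

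The whole procedure makes at most $2\cdot|\epequiv_i|$ oracle calls, each on an input of polynomial size, and the bookkeeping between calls is done in deterministic polynomial time, so the algorithm runs in $P^{NP}=\Delta_2^p$. I would then output \textbf{true} exactly when condition (a) passes and condition (b) holds, and \textbf{false} otherwise.

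The main obstacle I anticipate is bookkeeping rather than conceptual: one must be careful that the action-fixing constraints added to the oracle query are themselves expressible in $\langlogic_{\ltllogic+}$ and remain of polynomial size (they are, since $\plan$ has polynomial length and each $\does{i}{a,t}$ is atomic), and that the plan-length bound supplied to LTLf-MA-PLANMIN-POLY is correctly taken to be $|\plan|$ so that ``compatible with $\plan$'' and ``any joint plan'' are compared at the same horizon. Once these encoding details are handled, membership in $\Delta_2^p$ follows immediately.
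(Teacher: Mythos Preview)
Your proposal is correct and essentially follows the paper's approach: both give a $\Delta_2^p$ algorithm that, for each state in $\epequiv_i$, issues NP-oracle queries (via LTLf-MA-PLANMIN-POLY with $i$'s actions fixed) to test whether $\plan$ forces $\omega$ from that state, together with oracle queries testing whether $\neg\omega$ is achievable from some state in $\epequiv_i$. The only cosmetic difference is that you organise the argument around the paper's alternative ``unfolded'' characterisation of anticipated AAR (conditions (a) and (b)), whereas the paper's proof iterates over candidate $s_1\in\epequiv_i$ and re-runs the AAR-attribution check at each; the resulting oracle queries are the same, and your formulation is arguably tidier since condition (a) does not depend on the choice of $s_1$. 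One small encoding remark: the atom $\does{i}{a,t}$ with an explicit time index is not in $\langlogic_{\ltllogic}$ as defined; you should phrase the action-fixing constraint as $\bigwedge_{t<|\plan|}\nexttime^{\,t}\,do(i,\plan(t))$, which is still of polynomial size.
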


\begin{proof}
To show that AAR-ANTICIPATION is in $\Delta_2^p$ we present a $\Delta_2^p$ algorithm. Given an instance of CAR-ANTICIPATION $\delta = ((\effprecond, s_0,(\epequiv_i)_{i\in \agentset}),\plan,i,\omega)$, for each $s_1 \in \epequiv_i$, we can check if $i$ anticipates CAR in $\plan$ from $s_1$ with polynomially many calls to the NP oracle, one to check if $\omega$ is inevitable from $s_1$, and one to check if the actions of $i$ guarantee $\omega$ in each $s_2 \in \epequiv_i$.
\end{proof}

\end{document}